     \def\section{\@startsection{section}{1}%
     \z@{.7\linespacing\@plus\linespacing}{.5\linespacing}%
     {\bfseries
     \centering
     }}
     \def\@secnumfont{\bfseries}
\newtheorem{theorem}{Theorem}[section]
\newtheorem{lemma}{Lemma}[section]
\newtheorem{prop}{Proposition}[section]
\theoremstyle{definition}
\newtheorem{definition}{Definition}[section]
\numberwithin{equation}{section}
\newcommand{\mcx}{{\mathcal X}}
\newcommand{\mbr}{{\mathbb R}}
\newcommand{\mbe}{{\mathbb E}}
\newcommand{\la}{{ \langle}}
\newcommand{\ra}{{ \rangle}}
\newcommand{\defeq}{\vcentcolon=}
\begin{document}

\title[The Gaussian Radon Transform and Machine Learning]{The Gaussian Radon Transform and Machine Learning}



\author{Irina Holmes}
\address{Department of Mathematics \\
 Louisiana State University \\
Baton Rouge, LA 70803 \\
e-mail: \sl irina.c.holmes@gmail.com}
\thanks{Irina Holmes is a Dissertation Year Fellow at Louisiana State University, Department of Mathematics; {\sl irina.c.holmes@gmail.com}}

\author{Ambar N.~Sengupta}
\address{Department of Mathematics \\
 Louisiana State University \\
Baton Rouge, LA 70803 \\
e-mail: \sl ambarnsg@gmail.com}
\thanks{Ambar Niel Sengupta  is Hubert Butts Professor of Mathematics  at Louisiana State University; {\sl ambarnsg@gmail.com}}

\subjclass[2010]{Primary 44A12, Secondary 28C20, 62J07}

\keywords{Gaussian Radon Transform, Ridge Regression, Kernel Methods, Abstract Wiener Space, Gaussian Process}
\date{March 2014}

\dedicatory{}

\begin{abstract}
There has  been  growing  recent interest in probabilistic interpretations of kernel-based methods as well as learning in Banach spaces.   The absence of a useful  Lebesgue measure on an infinite-dimensional   reproducing kernel Hilbert space  is a serious obstacle for such stochastic models. We propose an estimation model for the ridge regression problem within the framework of abstract Wiener spaces and show how the support vector machine solution to such problems can be interpreted in terms of the Gaussian Radon transform.
\end{abstract}

\maketitle

\section{Introduction}\label{S:Intro}

A central task in machine learning is the prediction of unknown values based on `learning' from a given set of outcomes. More precisely, suppose $\mathcal{X}$ is a non-empty set, the {\em input space}, and
	\begin{equation}
	D = \{ (p_1, y_1), (p_2, y_2), \ldots, (p_n, y_n)\} \subset \mathcal{X} \times \mathbb{R}
	\end{equation}
is a finite collection of input values $p_j$ together with their corresponding real outputs $y_j$. The goal is to predict the value $y$ corresponding to a yet unobserved input value $p \in \mathcal{X}$. The fundamental assumption of kernel-based methods such as support vector machines (SVM) is that the predicted value is given by $\hat{f}(p)$, where the {\em decision} or {\em prediction function} $\hat{f}$ belongs to a reproducing kernel Hilbert space (RKHS) $H$ over $\mathcal{X}$, corresponding to  a positive definite function $K : \mathcal{X}\times\mathcal{X}\rightarrow\mathbb{R}$ (see Section \ref{Ss:RKHS}). In particular, in {\em ridge regression}, the predictor function $\hat{f}_{\lambda}$ is the solution to the minimization problem:
	\begin{equation} \label{E:RegRegr}
	\hat{f}_{\lambda} = \text{arg}\min_{f\in H} \left( \sum_{j=1}^n (y_j - f(p_j))^2 + \lambda\|f\|^2 \right),
	\end{equation}
where $\lambda > 0$ is a {\em regularization parameter} and $\|f\|$ denotes the norm of $f$ in $H$.   The regularization  term $ \lambda\|f\|^2$  penalizes functions that `overfit' the training data.

The minimization problem  \eqref{E:RegRegr} has a unique solution, given by a linear combination of the functions $K_{p_j} = K(p_j, \cdot)$. Specifically:
	\begin{equation}\label{E:RegRegrSol}
	\hat{f}_{\lambda} = \sum_{j=1}^n c_jK_{p_j}
	\end{equation}
where $c\in\mathbb{R}^n$ is given by:
	\begin{equation}\label{E:RegRegrSola}
	c = (K_D + \lambda I_n)^{-1}y
	\end{equation}
where $K_D \in \mathbb{R}^{n\times n}$ is the matrix with entries $[K_D]_{i j} = K(p_i, p_j)$, $I_n$ is the identity matrix of size $n$ and $y=(y_1,\ldots, y_n)\in\mathbb{R}^n$ is the vector of outputs. {\em We present a geometrical proof of this classic result in Theorem} \ref{T:svmminim}.

Recently there has been a surge in interest in probabilistic interpretations of kernel-based learning methods; see for instance \cites{Sollich, Muk, Aravkin, Ozertem, Zhang, Huszar}. As we shall see below, there is a Bayesian interpretation and a stochastic process approach to the ridge regression problem when the RKHS is finite-dimensional, but many RKHSs used in practice are infinite-dimensional. The absence of Lebesgue measure on infinite-dimensional Hilbert spaces poses a roadblock to such interpretations in this case. In this paper:
	\begin{itemize}
	\item We show that there is a valid stochastic interpretation to the ridge regression problem in the infinite-dimensional case, by working within the framework of abstract Wiener spaces and Gaussian measures instead of working with the Hilbert space alone. Specifically, we show that the ridge regression solution may be obtained in terms of a conditional expectation and the Gaussian Radon transform.
	\item We show that, within the more traditional spline setting, the element of $H$ of minimal norm that satisfies $f(p_j) = y_j$ for $1 \leq j \leq n$ can also be expressed in terms of the Gaussian Radon transform.
	\item We propose a way to use the Gaussian Radon transform for a broader class of prediction problems. Specifically, this method could potentially be used to not only predict a particular output $f(p)$ of a future input $p$, but to predict a function of future outputs; for instance, one could be interested in predicting the maximum or minimum value over an interval of future outputs.
	\item In this work we start with a RKHS $H$ and complete it with respect to a measurable norm to obtain a Banach space $B$. However, the space $B$ does not necessarily consist of functions. In light of recent interest in reproducing kernel Banach spaces, which do consist of functions, we propose a method to realize $B$ as a space of functions.
	\end{itemize}

\subsection{Probabilistic Interpretations to the Ridge Regression Problem in Finite Dimensions}

Let us first consider a Bayesian perspective: suppose that our parameter space is an RKHS $H$ with reproducing kernel $K : \mathcal{X} \times \mathcal{X} \rightarrow \mathbb{R}$. If $H$ is finite-dimensional, we take standard Gaussian measure on $H$ as our \emph{prior distribution}:
	\begin{equation*}
	\rho(f) = \frac{1}{(2\pi)^{d/2}} e^{-\frac{1}{2}\|f\|^2} \text{, for all } f \in H,
	\end{equation*}
where $d$ is the dimension of $H$. Let $\tilde{f}$ denote, for every $f \in H$, the continuous linear functional $\la f, \cdot\ra$ on $H$. With respect to standard Gaussian measure, every $\tilde{f}$ is normally distributed with mean $0$ and variance $\|f\|^2$, and Cov$(\tilde{f}, \tilde{g}) = \la f, g\ra$. Now recall that $H$ contains the functions $K_p = K(p, \cdot)$ on $\mathcal{X}$, and $f(p) = \la f, K_p\ra$ for all $f \in H$, $p \in \mathcal{X}$. Then $\{\tilde{K}_p\}_{p\in\mathcal{X}}$ is a centered Gaussian process on $H$ with covariance function $K$:  ${\rm Cov}(\tilde{K}_p, \tilde{K}_{p'}) = K(p,p')$ for all $p, p' \in \mathcal{X}$. If we are given the training data $D = \{(p_1, y_1), \ldots, (p_n, y_n)\} \subset \mathcal{X} \times \mathbb{R}$ and we assume the measurement contains some error we would like to model as Gaussian noise, we choose an orthonormal set $\{e_1, \ldots, e_n\} \subset H$ such that $e_j \perp K_{p_i}$ for every $1 \leq i, j \leq n$. Then
	$$\{\tilde{K}_{p_j}\}_{1\leq j \leq n} \text{ and } \{\tilde{e}_j\}_{1\leq j \leq n}$$
are independent centered Gaussian processes on $H$, and for every $f \in H$ we model our data as arising from the event:
	$$ \tilde{y}_j = \tilde{K}_{p_j}(f) + \sqrt{\lambda}\tilde{e}_j\text{, for all } 1\leq j \leq n,$$
where $\lambda > 0$ is a fixed parameter. Then for every $f \in H$, $\{\tilde{y}_1, \ldots, \tilde{y}_n\}$ are independent Gaussian random variables on $H$ with mean $f(p_j)$ and variance $\lambda$ for every $1 \leq j \leq n$, which gives rise to the statistical model of probability distributions on $\mathbb{R}^n$:
	$$\rho_{\lambda}(x | f) = \prod_{j=1}^n \frac{1}{\sqrt{2\pi\lambda}} e^{-\frac{1}{2\lambda}(x_j - f(p_j))^2},$$
for every $x = (x_1, \ldots, x_n) \in \mbr^n$. Replacing $x$ with the vector $y = (y_1, \ldots, y_n)$ of observed values, the \emph{posterior distribution} resulting from this is proportional to:
	$$e^{-\frac{1}{2\lambda}[\sum_{j=1}^n (y_j - f(p_j))^2 + \lambda\|f\|^2]}.$$
Therefore finding the \emph{maximum a posteriori} (MAP) estimator in this situation amounts exactly to finding the solution to the ridge regression problem in \eqref{E:RegRegr}.

Clearly, this Bayesian approach depends on $H$ being finite-dimensional; however, reproducing kernel Hilbert spaces used in practice are often infinite-dimensional (such as those arising from Gaussian RBF kernels). The ridge regression SVM previously discussed goes through regardless of the dimensionality of the RKHS, and there is still a need for a valid stochastic interpretation of the infinite-dimensional case. We now explore another stochastic approach to ridge regression, which is equivalent to the Bayesian one, but which can be carried over in a sense to the infinite-dimensional case, as we shall see later. Suppose again that $H$ is a finite-dimensional RKHS over $\mcx$, with reproducing kernel $K$, and equipped with standard Gaussian measure. Recall that $\{\tilde{K}_p\}_{p\in\mcx}$ is then a centered Gaussian process on $H$ with covariance function $K$. If we assume that the data arises from some unknown function in $H$, then the relationship $f(p) = \tilde{K}_p(f)$ suggests that the random variable $\tilde{K}_p$ is a good model for the outputs. Moreover, the training data $D=\{ (p_1, y_1), \ldots, (p_n, y_n) \}$ provides some previous knowledge of the random variables $\tilde{K}_{p_j}$, which we can use to refine our estimation of $\tilde{K}_p$ by taking conditional expectations. In other words, our first guess would be to estimate the output of a future input $p\in\mcx$ by:
	\begin{equation} \label{E:MLFirstInstinct}
	\mbe[\tilde{K}_p | \tilde{K}_{p_1} = y_1, \ldots, \tilde{K}_{p_n} = y_n].
	\end{equation}
But if we want to include some possible noise in the measurements, we would like to have a centered Gaussian process $\{\epsilon_1, \ldots, \epsilon_n\}$ on $H$, with covariance function Cov$(\epsilon_i, \epsilon_j) = \lambda \delta_{i,j}$ for some parameter $\lambda > 0$, which is also independent of $\{K_{p_j}\}_{1\leq j \leq n}$.

Let us fix $p\in\mcx$, a `future' input whose output we would like to predict. To take measurement error into account, we  choose again an orthonormal set $\{e_1, \ldots, e_n\}\subset H$ such that:
	\begin{equation*}
	\{e_1, \ldots, e_n\} \subset [\text{span}\{K_{p_1}, \ldots, K_{p_n}, K_p\}]^{\perp},
	\end{equation*}
and $\lambda > 0$, and set:
	\begin{equation*}
	y_j = \tilde{K}_{p_j} + \sqrt{\lambda}\tilde{e}_j \text{, for all } 1\leq j \leq n.
	\end{equation*}
Then we estimate the output $\hat{y}(p)$ as the conditional expectation:
	\begin{equation*}
	\hat{y}(p) = \mbe[\tilde{K}_p | \tilde{K}_{p_j} + \sqrt{\lambda}\tilde{e}_j = y_j, 1 \leq j \leq n].
	\end{equation*}
As shown in Lemma \ref{L:condExpGauss} below:
	\begin{equation*}
	\hat{y}(p) = a_1y_1 + \ldots + a_ny_n,
	\end{equation*}
where $a=(a_1, \ldots, a_n) \in \mbr^n$ is:
	\begin{equation*}
	a = A^{-1}\left[\text{Cov}(\tilde{K}_p, \tilde{K}_{p_j} + \sqrt{\lambda}\tilde{e}_j)\right]_{1\leq j \leq n},
	\end{equation*}
with:
	\begin{equation*}
	[A]_{i,j} = [K_D + \lambda I_n]_{i,j},
	\end{equation*}
for all $1\leq i, j \leq n$, with $K_D$ being the $n\times n$ matrix with entries given by $K(p_i,p_j)$. Moreover:
	\begin{equation*}
	\text{Cov}(\tilde{K}_p, \tilde{K}_{p_j} + \sqrt{\lambda}\tilde{e}_j) = \la K_p, K_{p_j}\ra = K_{p_j}(p).
	\end{equation*}
Note  that this last relationship is why we required that $\{e_1, \ldots, e_n\}$ also be orthogonal to $K_p$. This yields:
	\begin{equation*}
	\hat{y}(p) = \sum_{j=1}^n [(K_D + \lambda I_n)^{-1}y]_j K_{p_j}(p),
	\end{equation*}
showing that the prediction $\hat{y}(p)$ is precisely the ridge regression solution $\hat{f}_{\lambda}(p)$ in \eqref{E:RegRegrSol}.

Our goal in this paper is to show that the Gaussian process approach does go through in infinite-dimensions and, moreover, that it remains equivalent to the SVM solution. Since the SVM solution in \eqref{E:RegRegrSol} is contained in a finite-dimensional subspace, the possibly infinite dimensionality of $H$ is less of a problem in this setting. However, if we want to predict based on a stochastic model for $f$ and $H$ is infinite-dimensional, the absence of Lebesgue measure becomes a significant problem. In particular, we cannot have the desired Gaussian process $\{\tilde{K}_p:p\in\mathcal{X}\}$ on $H$ itself.  The approach we propose is to work within the framework of abstract Wiener spaces, introduced by L. Gross in the celebrated work \cite{Gr}. The concept of abstract Wiener space was born exactly from this need for a ``standard Gaussian measure'' in infinite dimensions and has become a standard framework in infinite-dimensional analysis.  We outline the basics of this theory in Section \ref{ss:gmbs} and showcase the essentials of the classical Wiener space, as  the ``original'' special case of an abstract Wiener space, in Section \ref{ss:CWS}.  

We construct a Gaussian measure with the desired properties on a larger Banach space $B$ that contains $H$ as a dense subspace;  the geometry of this measure is dictated by the inner-product on $H$. This construction   is presented in Section \ref{ss:cons}. We then show in Section \ref{S:gml} how the ridge regression learning problem outlined above can be understood in terms of the Gaussian Radon transform. The Gaussian Radon transform associates to a function $f$ on $B$ the function $Gf$, defined on the set of closed affine subspaces of $H$, whose value $Gf(L)$ on a closed affine subspace $L\subset H$ is the integral $\int f\,d\mu_L$, where $\mu_L$ is a Gaussian measure on the closure $\overline{L}$ of $L$ in $B$ obtained from the given Gaussian measure on $B$. This is explained in more detail in Section \ref{S:backg}. For more on the Gaussian Radon transform we refer to the works \cites{BecGR2010, BecSen2012, Hol2013, HolSen2012, MS}. 

Another area that has recently seen strong activity is learning in Banach spaces; see for instance \cites{Xu, learning, Der}. Of particular interest are reproducing kernel Banach spaces, introduced in \cite{Xu}, which are special Banach spaces whose elements are functions. The Banach space $B$ we use in Section \ref{S:gml} is a completion of a reproducing kernel Hilbert space, but does not directly consist of functions. A notable exception is the classical Wiener space, where the Banach space is $C[0,1]$. In Section \ref{S:Basfunct} we address this issue and propose a realization of $B$ as a space of functions.

Finally, in Appendix \ref{S:geomform} we present a geometric view. First we present a geometric proof  of the representer theorem for the SVM minimization problem and then describe the  relationship with the Gaussian Radon transform in geometric terms.

\section{Background}\label{S:Background}

\subsection{Realizing covariance structures with Banach spaces}\label{S:HilbGaussGauss}

In this section we construct a Gaussian measure on a Banach space along with random variables defined on this space for which the covariance structure is specified in advance.  In more detail, suppose ${\mathcal X}$ is a non-empty set and 
	\begin{equation}\label{E:defK}
	K: {\mathcal X}\times{\mathcal X}\to\mbr
	\end{equation}
a function that is symmetric and positive definite (in the sense that the matrix $[K(p,q)]_{p,q\in{\mathcal X}}$ is symmetric and positive definite); we will construct a measure $\mu$ on a certain Banach space $B$ along with a family of Gaussian random variables $\tilde{K}_p$,  with $p$ running over $ {\mathcal X}$, such that  
	\begin{equation}\label{E:KpqCov} 
	K(p,q)={\rm Cov}(\tilde{K}_p, \tilde{K}_q) 
	\end{equation}
for all $p, q\in{\mathcal X}$.  A well-known choice for $K(p,q)$, for $p,q\in \mbr^d$,  is given by
	$$e^{-\frac{|p-q|^2}{2s}},$$
where $s>0$ is a scale parameter.
 
The strategy is as follows: first we construct a Hilbert space $H$ along with elements $K_p\in H$, for each $p\in {\mathcal X}$, for which $\la  K_p,  K_q\ra=K(p,q)$ for all $p, q\in {\mathcal X}$. Next we describe how to obtain a Banach space $B$, equipped with a Gaussian measure, along with random variables $\tilde{K}_p$  that have the required covariance structure \eqref{E:KpqCov}. The first step is   a standard result for reproducing kernel Hilbert spaces.

\subsubsection{Constructing a Hilbert space from a covariance structure}  \label{Ss:RKHS}

For this we simply quote the well-known Moore-Aronszajn theorem  (see, for example, Chapter 4 of Steinwart  \cite{Ingo}).
 
 \begin{theorem}\label{T: covHilb} Let ${\mathcal X}$ be a non-empty set and $K: {\mathcal X}\times {\mathcal X}\to\mbr$ a function for which the matrix $[K(p,q)]_{p,q\in {\mathcal X}}$ is symmetric and positive definite:
 \begin{itemize}
 \item[(i)] $K(p,q)=K(q,p)$ for all $p, q\in {\mathcal X}$, and
 \item[(ii)] $\sum_{j,k=1}^Nc_jc_kK(p_j,p_k)\geq 0$ holds for all integers $N \geq 1$, all points $p_1, \ldots, p_N \in {\mathcal X}$, and all $c_1,\ldots, c_N \in \mbr$.
 \end{itemize}
 Then there is a unique Hilbert space $H$ consisting of real-valued functions defined on the set ${\mathcal X}$,  containing the functions $K_p=K(p,\cdot)$ for all $p\in {\mathcal X}$, with the inner product on $H$ being such that
 \begin{equation}\label{E:reker}
 f(p)=\la  K_p, f\ra\qquad\hbox{for all $f\in H$ and $p\in {\mathcal X}$.}
 \end{equation}
 Moreover, the linear span of $\{ K_p \,: \,p\in {\mathcal X}\}$ is dense in $H$.
 \end{theorem}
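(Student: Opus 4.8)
The plan is to follow the classical Moore--Aronszajn construction: build the space explicitly from the kernel on the linear span of the $K_p$, equip it with the only inner product compatible with the reproducing property, and then complete it while retaining enough control over pointwise values that the completion is again a genuine space of functions on $\mcx$.

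First I would set $H_0 = \mathrm{span}\{K_p : p \in \mcx\}$ and, for $f = \sum_j a_j K_{p_j}$ and $g = \sum_k b_k K_{q_k}$ in $H_0$, define
\begin{equation*}
\la f, g\ra = \sum_{j,k} a_j b_k K(p_j, q_k).
\end{equation*}
The first thing to check is that this is well defined, i.e.\ independent of the chosen representations: rewriting $\la f,g\ra = \sum_k b_k f(q_k) = \sum_j a_j g(p_j)$ shows the value depends only on the functions $f$ and $g$. Symmetry and bilinearity are immediate from property (i), and property (ii) gives $\la f,f\ra \geq 0$. To upgrade this semidefinite form to an inner product I would invoke Cauchy--Schwarz together with the reproducing identity $\la K_p, f\ra = f(p)$, which already holds on $H_0$ by construction; this yields the pointwise bound
\begin{equation*}
|f(p)|^2 = |\la K_p, f\ra|^2 \leq K(p,p)\,\la f,f\ra,
\end{equation*}
so that $\la f,f\ra = 0$ forces $f \equiv 0$ as a function.

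Next I would complete $H_0$ to an abstract Hilbert space $\overline{H_0}$ and realize it as functions via the embedding $\xi \mapsto \bigl(p \mapsto \la K_p, \xi\ra\bigr)$. The pointwise bound above shows that the evaluation functionals extend continuously, so Cauchy sequences in $H_0$ converge pointwise and this map is well defined; it is injective because $\la K_p, \xi\ra = 0$ for all $p$ means $\xi$ is orthogonal to the dense subspace $H_0$, hence $\xi = 0$. Identifying $\overline{H_0}$ with its image then defines $H$ as a Hilbert space of real-valued functions on $\mcx$, and passing to the limit in $\la K_p, f\ra = f(p)$ extends the reproducing property \eqref{E:reker} to all of $H$. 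Density of $\mathrm{span}\{K_p\}$ is immediate, since $H_0$ is dense in $H$ by construction.

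Finally, for uniqueness I would argue that any Hilbert space $H'$ of functions on $\mcx$ containing the $K_p$ and satisfying \eqref{E:reker} must induce $\la K_p, K_q\ra_{H'} = K(p,q)$, so its inner product agrees with the one above on $H_0$; moreover $H_0$ is dense in $H'$, since any $g \in H'$ orthogonal to every $K_p$ satisfies $g(p) = \la K_p, g\ra_{H'} = 0$ for all $p$. Two complete spaces sharing a common dense subspace with the same inner product and the same continuous point evaluations must coincide as function spaces, giving $H' = H$. I expect the only genuinely delicate step to be the realization of the abstract completion as an honest space of functions on $\mcx$ with values preserved under limits; everything hinges on the continuity of point evaluation, and once that bound is in hand the remaining verifications are routine.
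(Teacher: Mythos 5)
Your proof is correct, but be aware that there is no proof in the paper to compare it against: Theorem \ref{T: covHilb} is explicitly \emph{quoted} as the well-known Moore--Aronszajn theorem, with the reader referred to Chapter 4 of Steinwart and Christmann \cite{Ingo}. What you have written is the standard construction from that literature, and you handle its two genuinely delicate points correctly: first, condition (ii) only gives a positive \emph{semi}definite form on $H_0 = \mathrm{span}\{K_p : p \in \mcx\}$, and you upgrade it to definiteness through the bound $|f(p)|^2 \leq K(p,p)\,\la f, f\ra$, which is legitimate precisely because elements of $H_0$ are honest functions, so vanishing norm forces the zero function; second, the abstract completion must be re-realized as a function space, which you do via the continuously extended evaluation functionals $\xi \mapsto \la K_p, \xi\ra$, with injectivity coming from density of $H_0$ in the completion. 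The one spot where you wave your hands is the closing uniqueness principle (``two complete spaces sharing a common dense subspace with the same inner product and the same continuous point evaluations must coincide''); it deserves its one-line verification: if $f_n \in H_0$ converges to $f$ in $H'$ and to $g$ in $H$, then the sequence $(f_n)$ is Cauchy for the common inner product and, by continuity of point evaluations in both spaces, $f$ and $g$ are both its pointwise limit, so $f = g$ as functions and the identification is isometric. With that line added, your argument is a complete and faithful rendering of the result the paper chose to cite rather than prove.
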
 
 
The Hilbert space $H$ is called the {\em reproducing kernel Hilbert space} (RKHS) over $\mathcal{X}$ with {\em reproducing kernel} $K$.
 
For the mapping
\begin{equation}\label{E:Phipq1}
\Phi:{\mathcal X}\to H:p\mapsto \Phi(p) = K_p,
\end{equation}
 known as the {\em canonical feature map}, we have
	\begin{equation}\label{E:Phipq}
	|\!|\Phi(p)-\Phi(q)|\!|^2=K(p,p)-2K(p,q)+K(q,q)
	\end{equation}
for all $p, q\in {\mathcal X}$. Hence if ${\mathcal X}$ is a topological space and $K$ is continuous then so is the function $(p,q)\mapsto |\!|\Phi(p)-\Phi(q)|\!|^2$ given in (\ref{E:Phipq}), and the value of this being $0$ when $p=q$, it follows that $p\mapsto\Phi(p)$ is continuous. In particular, if ${\mathcal X}$ has a countable dense subset then so does the image $\Phi({\mathcal X})$, and since this spans a dense subspace of $H$ it follows that $H$ is separable.

\subsubsection{Gaussian measures on Banach spaces}\label{ss:gmbs} The theory of abstract Wiener spaces developed by Gross  \cite{Gr}  provides the machinery for constructing a Gaussian measure on a Banach space $B$ obtained by completing a given Hilbert space $H$ using a special type of norm $|\cdot|$ called a {\em measurable norm}. 
Conversely,  according to a fundamental result of Gross, every centered  non-degenerate Gaussian measure on a real separable Banach space arises in this way from completing an underlying Hilbert space called the {\em Cameron-Martin space}. We work here with real Hilbert spaces as a complex structure plays no role in the Gaussian measure in this context.  
  
\begin{definition}
A norm $|\cdot|$  on a real separable Hilbert space $H$ is said to be a {\em measurable norm} provided that for any $\epsilon > 0$, there is a finite-dimensional subspace $F_{\epsilon}$ of $H$ such that:	
	\begin{equation}\label{E:hepsilon}
	\gamma_F \left\{ h \in F : |h| > \epsilon \right\} < \epsilon
	\end{equation}
for every finite-dimensional subspace $F$ of $H$ with $F \perp F_{\epsilon}$, where $\gamma_F$ denotes standard Gaussian measure on $F$. Figure \ref{FIG:aws}(a) illustrates this notion.
\end{definition}

\begin{figure} 
\begin{center}
\subfigure[A measurable norm $|\cdot|$.]{\includegraphics[width=7cm]{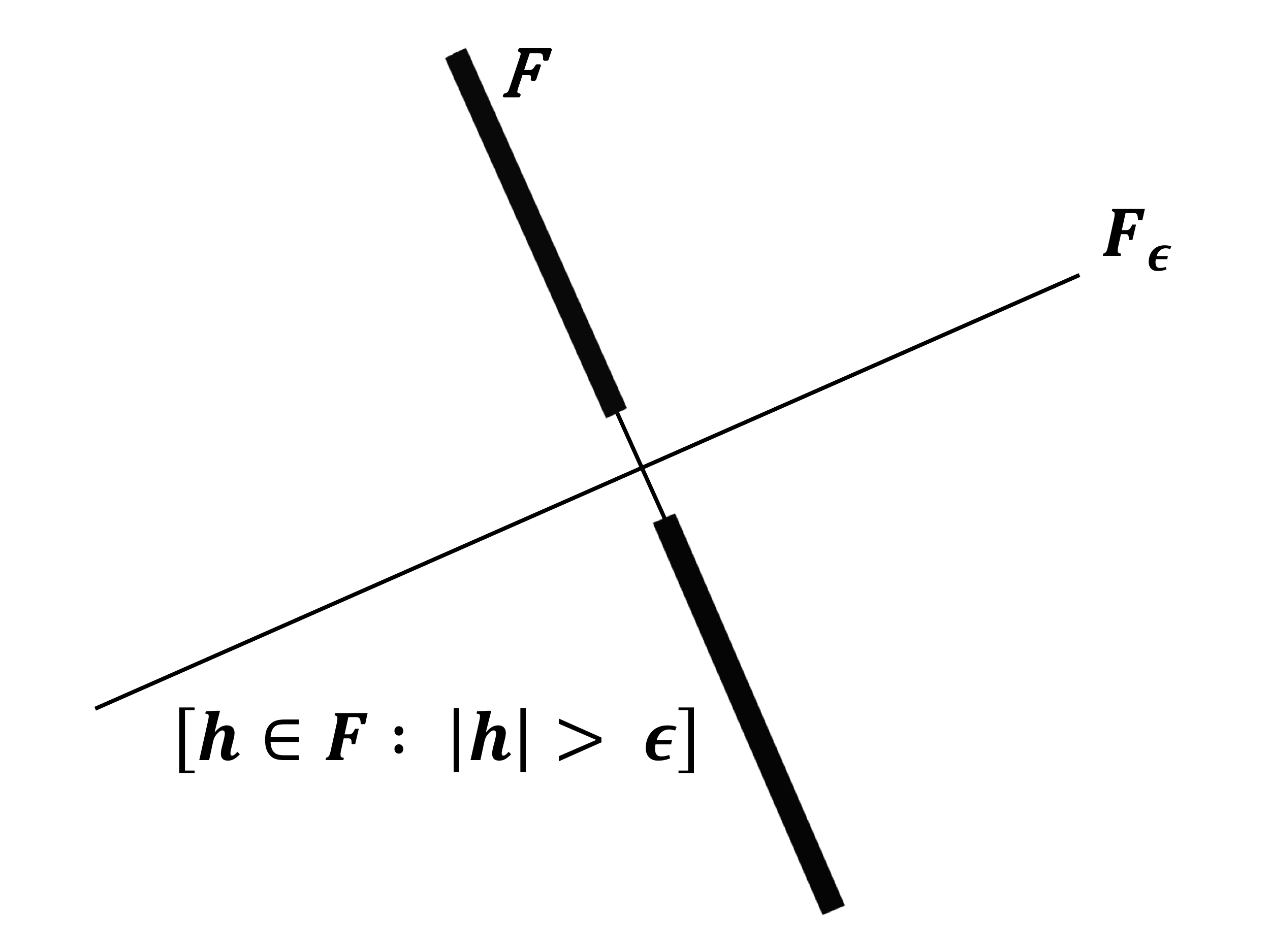}}
\hfill
\subfigure[Abstract Wiener Space.]{\includegraphics[width=7cm]{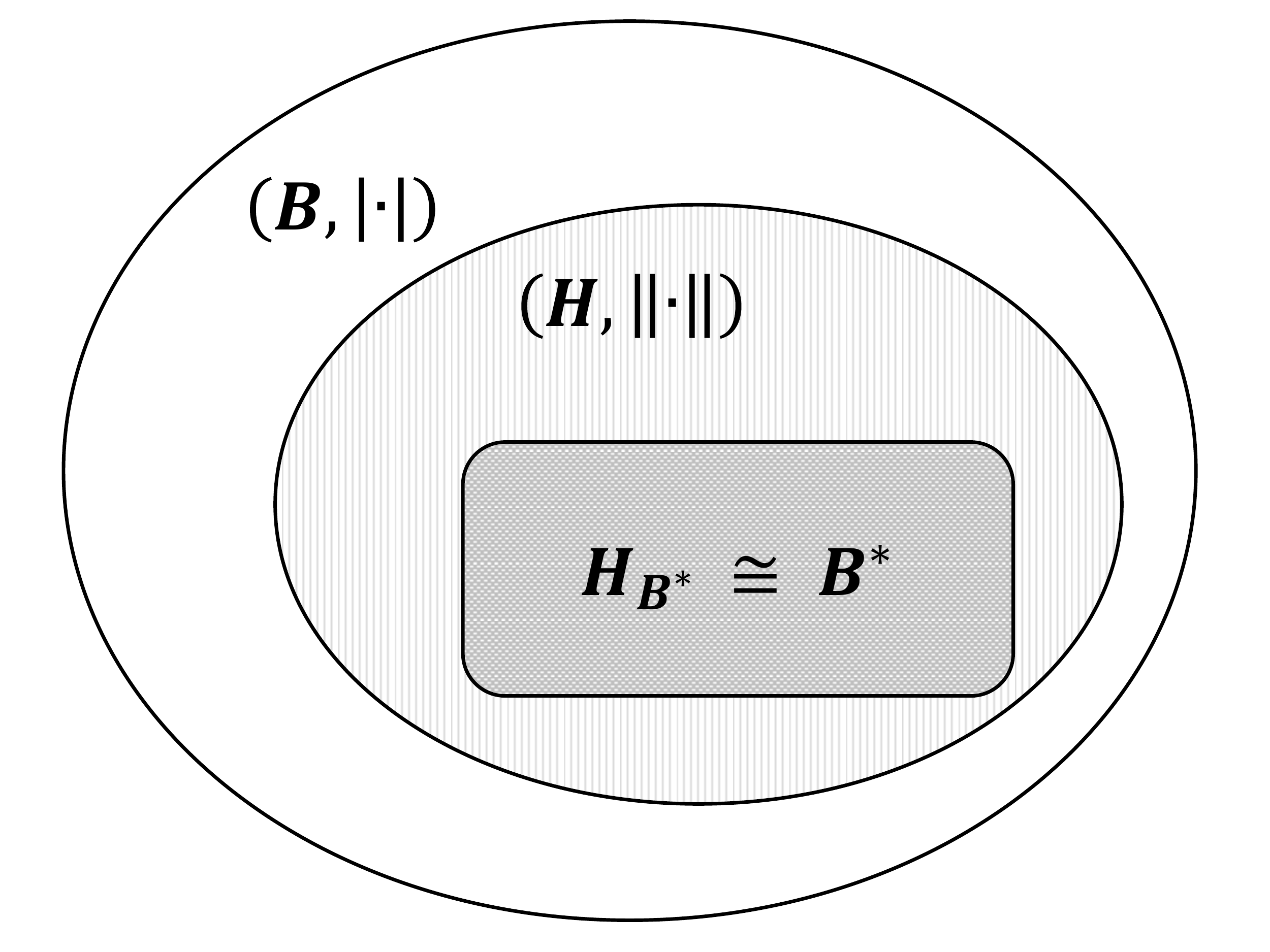}}
\caption{}
\label{FIG:aws}
\end{center}
\end{figure}

We denote the norm on $H$ arising from the inner-product $\la\cdot,\cdot\ra$ by $|\!|\cdot|\!|$, which is not to be confused with a measurable norm $|\cdot|$. Here are three facts about measurable norms (for proofs see Gross \cite{Gr}, Kuo \cite{Ku1} or Eldredge \cite{Nate}):  

	\begin{enumerate}
	\item A measurable norm is always weaker than the original norm: there is $c > 0$ such that:
		\begin{equation} \label{E:msnormwkr}
		|h| \leq c \|h\|,
		\end{equation}
		for all $h \in H$.

	\item If $H$ is infinite-dimensional, the original Hilbert norm $\|\cdot\|$ is not a measurable norm.
	
	\item If  $A$ is an injective Hilbert-Schmidt operator on $H$ then
		\begin{equation} \label{E:HilbSchmidt}
		|h| = \|Ah\| \text{, for all } h \in H
		\end{equation}
	specifies a measurable  norm on $H$.
	\end{enumerate}

Henceforth we denote by $B$ the Banach space obtained by completion of $H$ with respect to $|\cdot|$.  
	
Any element $f\in H$  gives a linear functional
	$$H\to\mbr: g\mapsto \la f,g\ra$$
	that is continuous with respect to the norm $\|\cdot\|$. Let
	$$H_{B^*}$$
	be the subspace of $H$ consisting of all $f$ for which $g\mapsto\la f,g\ra$ is continuous with respect to the norm $ |\cdot |$. Figure \ref{FIG:aws}(b) illustrates the relationships between $H$, $B$, and $H_{B^*}$.

	 By fact (1) above, any linear functional on $H$ that is continuous with respect to $|\cdot|$ is also continuous with respect to $\|\cdot\|$ and hence is of the form $\la f,\cdot\ra$ for a unique $f\in H$ by the traditional Riesz theorem. Thus $H_{B^*}$ consists precisely of those $f\in H$ for which the linear functional $\la f,\cdot\ra$ {\em is the restriction to $H$ of a (unique) continuous linear functional on the Banach space $B$}. We denote this extension of $\la f,\cdot\ra$ to an element of $B^*$ by $I(f)$:
	\begin{eqnarray}
	I(f) &=& \text{the continuous linear functional on $B$} \nonumber\\
	&& \text{agreeing with }\la f,\cdot\ra \text{ on } H \label{E:Ifdef}
	\end{eqnarray}	
for all $f\in H_{B^*}$. Moreover, $B^*$ consists exactly of the elements $I(f)$ with $f \in H_{B^*}$: for any $\phi\in B^*$ the restriction $\phi|H$ is in $H^*$ (because of the relation \eqref{E:msnormwkr}) and hence $\phi=I(f)$ for a unique $f\in H_{B^*}$.

The fundamental result of Gross \cite{Gr} is that there is a unique Borel measure $\mu$ on $B$ such that for every $f\in H_{B^*}$ the  linear functional $I(f) $, viewed as a random variable on $(B, \mu)$, is Gaussian with mean $0$ and variance $|\!| f|\!|^2$; thus:
	\begin{equation}\label{E:muB}
	 \int_B e^{iI(f)}\,d\mu
	 =e^{-\frac{1}{2}|\!|f|\!|_H^2} 
	 \end{equation}
for all $f\in H_{B^*}\subset H$.  The mapping
	$$I: H_{B^*}\to L^2(B,\mu)$$
is a linear isometry. 

A triple $(H, B, \mu)$ where $H$ is a real separable Hilbert space, $B$ is the Banach space obtained by completing $H$ with respect to a measurable norm, and $\mu$ is the Gaussian measure in \eqref{E:muB}, is called an {\em abstract Wiener space}. The measure $\mu$ is known as {\em Wiener measure} on $B$.
	 
Suppose $g\in H$ is orthogonal to $H_{B^*}$; then for any $\phi\in B^*$ we have
	$$\phi(g)=\la f,g\ra=0$$
where $f$ is the element of $H_{B^*}$ for which  $I(f)=\phi$.  Since $\phi(g)=0$ for all $\phi\in B^*$ it follows that    $g=0$. Thus,
	\begin{equation}
	\hbox{\em $H_{B^*}$ is dense in $H$.}
	\end{equation}
Consequently $I$ extends uniquely to a linear isometry of $H$ into $L^2(B,\mu)$. We denote this  extension again by $I$:
	\begin{equation}\label{E:Ilinisom}
	I: H \to L^2(B,\mu).
	\end{equation}
It follows by taking $L^2$-limits that the random variable $I(f)$ is again Gaussian, satisfying (\ref{E:muB}), for every $f\in H$. 
	
It will be important for our purposes to emphasize again that {\em if  $f\in H$ is such that the  linear functional $\la f,\cdot\ra$ on $H$  is continuous with respect to the norm $|\cdot|$ then the random variable $I(f)$ is the unique  continuous linear functional on $B$ that agrees with $f$ on $H$.} If $\la f, \cdot \ra$ is {\em not} continuous on $H$ with respect to the norm $|\cdot|$ then $I(f)$ is obtained by $L^2(B,\mu)$-approximating with elements of $B^*$.

Conversely, one can show that any real separable Banach space $B$ equipped with a centered, non-degenerate Gaussian measure $\mu$, can be placed in the context of an abstract Wiener space.  The corresponding Hilbert space is the  \emph{Cameron-Martin space} of $(B, \mu)$, the subspace $H \subset B$ given by:
	\begin{equation*}
	H \stackrel{\rm def}{=} \{x \in B : \|x\| < \infty\},
	\end{equation*}
where for every $x \in B$:
	\begin{equation*}
	\|x\| \stackrel{\rm def}{=} \sup \{(x^*, x) : x^* \in B^*; \|x^*\|_{L^2(B, \mu)} \leq 1\}.
	\end{equation*}
The norm $\|\cdot\|$ defined above is a complete inner-product norm on $H$, it is stronger than the Banach norm $|\cdot|$ on $H$, and $H$ is dense in $B$. Moreover, the norm $|\cdot|$, restricted to $H$, is a measurable norm - for an interesting proof of this fact, due to Stroock, see section VIII of Driver's notes \cite{DriverNotes}. So $(H, B, \mu)$ is an abstract Wiener space and $H$ is uniquely determined by $B$ and $\mu$.

\subsubsection{Construction of the random variables \texorpdfstring{$\tilde{K}_p$}{ Kp}}\label{ss:cons}  As before, we assume given a model covariance structure
	   $$K:{\mathcal X}\times {\mathcal X}\to\mbr.$$ 
We have seen how this gives rise to a Hilbert space $H$, which is the closed linear span of the functions $K_p=K(p,\cdot)$, with $p$ running over ${\mathcal X}$.  Now let $|\cdot|$ be any measurable norm on $H$, and let  $B$ be the Banach space obtained by completion of $H$ with respect to $|\cdot|$. Let $\mu$ be  the centered Gaussian measure on $B$ as discussed above in  the context of \eqref{E:muB}. We set
	\begin{equation}\label{E:KpIKp}
	\tilde{K}_p\stackrel{\rm def}{=}I(K_p),
	\end{equation}
for all $p \in \mathcal{X}$; thus ${\tilde K}_p\in L^2(B,\mu)$. As a random variable  on $(B,\mu)$ the function ${\tilde K}_p$   is Gaussian, with mean $0$ and variance $\|K_p\|^2$, and the covariance structure is given by:
\begin{equation}\label{E:covKpqKqI}
\begin{split}
{\rm Cov}(\tilde{K}_p,\tilde{K}_q) &=\la \tilde{K}_p,\tilde{K}_q\ra_{L^2(\mu)}\\
&=\la K_p, K_q\ra_H\quad\hbox{(since $I$ is an isometry)}\\
&=K(p,q).
\end{split}
\end{equation}
Thus we have produced Gaussian random variables $\tilde{K}_p$, for each $p\in {\mathcal X}$,  on a Banach space, with a given covariance structure.

\subsubsection{Classical Wiener space}\label{ss:CWS} Let us look at the case of the classical Wiener space, as an example of the preceding structures. We take ${\mathcal X}=[0,T]$ for some positive real $T$, and 
\begin{equation}\label{E:KBM}
K^{\rm BM}(s,t)=\min\{s,t\}=\la 1_{[0,s]}, 1_{[0,t]}\ra_{L^2},
\end{equation}
for all $s, t\in [0,T]$, where the superscript refers to Brownian motion. Then $K^{\rm BM}_s(x)$ is $x$ for $x\in [0,s]$ and is constant at $s$ when $x\in [s, T]$. It follows then that $H_0$, the linear span of the functions $K^{\rm BM}_s$ for $s\in [0,T]$, is the set of all functions on $[0,T]$ with initial value $0$  and graph consisting of linear segments; in other words, $H_0$ consists of all functions $f$ on $[0,T]$ whose derivative exists and is locally constant outside a finite set of points. The inner product on $H_0$ is determined by observing that
\begin{equation}\label{E:KBMcov}
\la {K}^{\rm BM}_s, {K}^{\rm BM}_t\ra = K^{\rm BM} (s,t)=\min\{s,t\};
\end{equation}
 we can verify that this coincides with
	\begin{equation}
	\la f, g\ra=\int_0^Tf'(u)g'(u)\,du
	\end{equation}
for all $f, g\in H_0$.  Consequently,  $H$ is the Hilbert space consisting of  functions $f$ on $[0,T]$ that can be expressed as
\begin{equation}
f(t)=\int_0^tg(x)\,dx\qquad\hbox{for all $t\in [0,T]$,}
\end{equation}
for some $g\in L^2[0,T]$; equivalently, $H$ consists of all absolutely continuous  functions $f$ on $[0,T]$, with $f(0)=0$, for which the derivative $f'$ exists almost everywhere and is in $L^2[0,T]$. The sup norm $|\cdot|_{\sup}$ is a measurable norm on $H$ (see Gross \cite[Example 2]{Gr} or Kuo \cite{Ku1}).  The completion $B$ is then $C_0[0,T]$, the space of all continuous functions starting at $0$. If $f\in H_0$ then $I(f)$ is Gaussian of mean $0$ and variance $\|f\|^2$, relative to the Gaussian measure $\mu$ on  $B$.  We can check readily that $\tilde{K}^{\rm BM}_t=I(K^{\rm BM}_t)$ is Gaussian of mean $0$ and variance $t$ for all $t \in [0, T]$. The process $t\mapsto \tilde{K}^{\rm BM}_t$ yields standard Brownian motion, after a continuous version is chosen.

\subsection{The Gaussian Radon Transform}\label{S:backg}
 
The classical Radon transform $Rf$ of a function $f:\mbr^n\to \mbr$ associates to each hyperplane $P\subset\mbr^n$ the integral of $f$ over $P$; this is useful in scanning technologies and image reconstruction. The Gaussian Radon transform generalizes this to infinite dimensions and works with Gaussian measures (instead of Lebesgue measure, for which there is no useful infinite dimensional analog);  the motivation for studying this transform comes from the task of reconstructing a random variable from its conditional expectations. We have developed this transform in the setting of abstract Wiener spaces in our work \cite{HolSen2012} (earlier works, in other frameworks, include \cites{MS, BecSen2012, BecGR2010}). 
  
Let $H$ be a  real separable Hilbert space, and $B$ the Banach space  obtained by completing $H$ with respect to a measurable norm $|\cdot|$. Let $M_0$ be a closed subspace of $H$. Then 
\begin{equation}\label{E:Mpdef}
M_p=p+ {M_0}
\end{equation}
   is a   closed  affine subspace of $H$, for any point $p\in M_0^\perp$. In \cite[Theorem 2.1]{HolSen2012}   we have constructed a probability measure $\mu_{M_p}$ on $B$ uniquely specified by its Fourier transform 
 \begin{equation} \label{E:GRchar}
	\int_B e^{ix^*} \,d\mu_{M_p} = e^{i(p, x^*) - \frac{1}{2}\|P_{M_0} {x^*}\|^2} \text{, for all } x^* \in B^*\simeq H_{B^*}\subset H
	\end{equation}
where $P_{M_0}$ denotes orthogonal projection on $M_0$ in $H$. For our present purposes  we formulate the description of the measure $\mu_{M_p}$ in slightly different terms.   For every closed affine subspace $L\subset H$ there is a Borel probability measure $\mu_L$ on $B$ and  there is a linear mapping
$$I:H\to L^2(B,\mu_L)$$
such that for every $h\in H$, the  random variable $I(h)$  satisfies 
\begin{equation} \label{E:GRcharIh}
	\int_B e^{iI(h)} \,d\mu_{L} = e^{i\la f_L,h\ra  - \frac{1}{2}\|P {h}\|^2} \text{, for all } h\in H,
	\end{equation}
	where $f_L$ is the point on $L$ closest to the origin in $H$, and $P:H\to H$ is the orthogonal projection operator onto the closed subspace $-f_L+ L$. Moreover,  $\mu_{L}$ is concentrated on the closure of $L$ in $B$:
$$\mu_{L}(\overline{L}) = 1,$$
where $\overline{L}$ denotes the closure in $B$ of the affine subspace $L\subset H$.  (Note that the mapping $I$ depends on the subspace $L$.  For more details about $I_{L}$, see Corollary 3.1.1 and observation (v) following Theorem 2.1 in \cite{Hol2013}.)

From the condition (\ref{E:GRcharIh}), holding for all $h\in H$,  we see that    $I(h)$ is Gaussian with mean $\la h, f_L\ra$ and variance $\|P {h}\|^2$.

The {\em Gaussian Radon transform} $Gf$ for a Borel function $f$ on $B$ is a function defined on the set of all closed affine subspaces $L$ in $H$ by:
	\begin{equation}\label{E:GfP}
	Gf(L) \stackrel{\text{def}}{=} \int_B f \,d\mu_L.
	\end{equation}
	(Of course, $Gf$  is  defined if the integral is finite for all such $L$.)  
	
 The value $Gf(L)$ corresponds to the conditional expectation of $f$, the conditioning being given by the closed affine subspace $L$. For a precise formulation of this and proof for $L$ being of finite codimension see the disintegration formula given in \cite[Theorem 3.1]{Hol2013}. The following is an immediate consequence of \cite[Corollary 3.2]{Hol2013} and will play a role in Section \ref{S:gml}:

\begin{prop}\label{P:CondExpGf}
Let $(H, B, \mu)$ be an abstract Wiener space and linearly independent elements $h_1, h_2, \ldots, h_n$ of $H$. Let $f$ be a  Borel function  on $B$,  square-integrable with respect to $\mu$, and  let
	\begin{equation*}
	F : \mathbb{R}^n \rightarrow \mathbb{R}:(y_1,\ldots, y_n)\mapsto  F(y_1, \ldots, y_n) = Gf\left( \bigcap_{j=1}^n \left[ \left<h_j, \cdot\right> = y_j \right] \right).
	\end{equation*}
Then $F\bigl(I(h_1), I(h_2), \ldots, I(h_n)\bigr)$ is a version of  the conditional expectation 
$$\mathbb{E}\left[f| \sigma\left(I(h_1), I(h_2), \ldots, I(h_n)\right)\right].$$ 
\end{prop}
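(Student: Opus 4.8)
The plan is to verify directly that $F(I(h_1),\ldots,I(h_n))$ satisfies the two defining properties of the conditional expectation $\mbe[f\mid\sigma(I(h_1),\ldots,I(h_n))]$: that it is measurable with respect to $\sigma(I(h_1),\ldots,I(h_n))$, and that for every bounded Borel function $\phi:\mbr^n\to\mbr$,
\begin{equation*}
\int_B f\cdot\phi(I(h_1),\ldots,I(h_n))\,d\mu = \int_B F(I(h_1),\ldots,I(h_n))\cdot\phi(I(h_1),\ldots,I(h_n))\,d\mu.
\end{equation*}
The first property is automatic once one checks that $y\mapsto Gf(L_y)$ is Borel, since $F(I(h_1),\ldots,I(h_n))$ is then a Borel function of the generating random variables. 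The entire content lies in the integral identity, and the tool for it is the disintegration of $\mu$ along the family of affine slices $L_y=\bigcap_{j=1}^n[\la h_j,\cdot\ra=y_j]$.

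The key geometric observation is that on $(B,\mu_{L_y})$ the random variables $I(h_j)$ are almost surely constant, equal to $y_j$. Indeed, by \eqref{E:GRcharIh} the variance of $I(h)$ under $\mu_{L_y}$ is $\|Ph\|^2$, where $P$ is orthogonal projection onto $-f_{L_y}+L_y=\{v\in H:\la h_j,v\ra=0,\ 1\le j\le n\}=(\mathrm{span}\{h_1,\ldots,h_n\})^\perp$; since each $h_j\in\mathrm{span}\{h_1,\ldots,h_n\}$ we get $Ph_j=0$, so $I(h_j)$ has zero variance under $\mu_{L_y}$. Its mean is $\la h_j,f_{L_y}\ra=y_j$, because $f_{L_y}\in L_y$. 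Hence $I(h_j)=y_j$ almost surely with respect to $\mu_{L_y}$, so $\phi(I(h_1),\ldots,I(h_n))=\phi(y)$ holds $\mu_{L_y}$-a.s. This is precisely where the linear independence of $h_1,\ldots,h_n$ enters: it guarantees that $L_y$ is a nonempty closed affine subspace of codimension $n$ for every $y\in\mbr^n$, and that the Gram matrix $[\la h_i,h_j\ra]$ is invertible, so that the joint law $\nu$ of $(I(h_1),\ldots,I(h_n))$ is a nondegenerate Gaussian on $\mbr^n$.

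Given this, I would invoke the disintegration formula of \cite[Theorem 3.1]{Hol2013}, which expresses $\int_B g\,d\mu$ as the average of $\int_B g\,d\mu_{L_y}$ against $\nu$. Applying it to $g=f\cdot\phi(I(h_1),\ldots,I(h_n))$ and using the slicewise identity above gives
\begin{equation*}
\int_B f\cdot\phi(I(h_1),\ldots,I(h_n))\,d\mu = \int_{\mbr^n}\phi(y)\Bigl(\int_B f\,d\mu_{L_y}\Bigr)\,d\nu(y)=\int_{\mbr^n}\phi(y)\,F(y)\,d\nu(y),
\end{equation*}
the last step by the definition \eqref{E:GfP} of $Gf(L_y)=F(y)$. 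Rewriting the right-hand side as an integral over $B$ through the push-forward $x\mapsto(I(h_1)(x),\ldots,I(h_n)(x))$ that defines $\nu$ recovers $\int_B F(I(h_1),\ldots,I(h_n))\,\phi(I(h_1),\ldots,I(h_n))\,d\mu$, completing the verification.

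The main obstacle is the disintegration formula itself: justifying that $\{\mu_{L_y}\}_{y}$ genuinely serves as a regular conditional distribution for $\mu$ given $\sigma(I(h_1),\ldots,I(h_n))$, together with the measurability of $y\mapsto Gf(L_y)$ and the integrability needed for the Fubini-type interchange. This is the substance of \cite[Theorem 3.1]{Hol2013} and its \cite[Corollary 3.2]{Hol2013}, so here it is quoted rather than reproved; modulo that input, the proposition follows from the elementary slicewise computation above.
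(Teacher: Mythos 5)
Your argument is correct and is essentially the paper's own: the paper offers no independent proof, declaring the proposition an immediate consequence of \cite[Corollary 3.2]{Hol2013}, and your derivation---the vanishing of $\|Ph_j\|$ together with $\langle h_j, f_{L_y}\rangle = y_j$ forcing $I(h_j)=y_j$ almost surely under $\mu_{L_y}$, fed into the disintegration formula of \cite[Theorem 3.1]{Hol2013}---is exactly the content of that corollary. Like the paper, you take the cited disintegration machinery as given (including the measurability of $y\mapsto Gf(L_y)$ and the identification of the $\mu$-version of $I(h_j)$ with its $\mu_{L_y}$-version on the slices), so nothing further is owed.
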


\section{Machine Learning using the Gaussian Radon Transform}\label{S:gml}

Consider again the learning problem outlined in the introduction: suppose $\mathcal{X}$ is a separable topological space and $H$ is the RKHS over $\mathcal{X}$ with reproducing kernel $K:\mathcal{X}\times\mathcal{X} \rightarrow\mathbb{R}$. Let:
	\begin{equation}
	D = \{ (p_1, y_1), (p_2, y_2), \ldots, (p_n, y_n)\} \subset \mathcal{X} \times \mathbb{R}
	\end{equation}
be our training data and suppose we want to predict the value $y$ at a future value $p\in\mathcal{X}$. Note that $H$ is a real separable Hilbert space so, as outlined in Section \ref{ss:gmbs}, we may complete $H$ with respect to a measurable norm $|\cdot|$ to obtain the Banach space $B$ and Gaussian measure $\mu$. Moreover, recall from Section \ref{ss:cons} that we constructed for every $p\in \mathcal{X}$ the random variable 
	\begin{equation}
	\tilde{K}_p = I(K_p) \in L^2(B, \mu),
	\end{equation}
where $I: H \rightarrow L^2(B, \mu)$ is the isometry in \eqref{E:Ilinisom}, and
	\begin{equation}
	\text{Cov}(\tilde{K}_{p}, \tilde{K}_{q}) = \left<K_{p}, K_{q}\right> = K(p, q)
	\end{equation}
for all $p,q\in\mathcal{X}$. Therefore $\{\tilde{K}_p: p\in\mathcal{X}\}$ is a centered Gaussian process with covariance $K$.

Since for every $f\in H$ the value $f(p)$ is given by $\left<K_p, f\right>$, we work with $\tilde{K}_p(f)$ as our desired random-variable prediction. The first  guess  would therefore be to use:
	\begin{equation}\label{E:FirstInstinct}
	\hat{y} = \mathbb{E}\left[ \tilde{K}_p | \tilde{K}_{p_1} = y_1, \ldots, \tilde{K}_{p_n}=y_n \right]
	\end{equation}
as our prediction of the output $y$ corresponding to the input $p$. 

Using Lemma \ref{L:condExpGauss} the prediction in \eqref{E:FirstInstinct} becomes:
	\begin{eqnarray*}
	\hat{y} &=& \left[ K(p_1, p), \ldots ,K(p_n, p) \right](K_D)^{-1}y\\
	&=& \sum_{j=1}^n b_j K_{p_j}(p)
	\end{eqnarray*}
where $b = (K_D)^{-1}y$ and $K_D$ is, as in the introduction, the matrix with entries $K(p_i, p_j)$. As we shall see in Theorem \ref{T:splinemin}, the function $\sum_{j=1}^n b_jK_{p_j}$ is the element $\hat{f}_0 \in H$ of minimal norm such that $\hat{f}_0(p_i) = y_i$ for all $1 \leq i \leq n$. Therefore \eqref{E:FirstInstinct} is the solution in the traditional spline setting, and does not take into account the regularization parameter $\lambda$. Combining this with Proposition \ref{P:CondExpGf}, we have:

\begin{theorem} \label{T:SplineGf}
Let $(H, B, \mu)$ be an abstract Wiener space, where $H$ be the real RKHS over a separable topological space $\mcx$ with reproducing kernel $K$. Given:
	$$D = \{ (p_1, y_1), \ldots, (p_n, y_n)\} \subset \mcx \times \mbr,$$
such that $K_{p_1}, \ldots, K_{p_n}$ are linearly independent, the element $\hat{f}_0 \in H$ of minimal norm that satisfies $\hat{f}_0(p_j) = y_j$ for all $1 \leq j \leq n$ is given by:
	\begin{equation} \label{E:SplineGf}
	\hat{f}_0(p) = G\tilde{K}_p\left( \bigcap_{j=1}^n \left[ \la K_{p_j}, \cdot\ra = y_j \right] \right),
	\end{equation}
where $G\tilde{K}_p$ is the Gaussian Radon transform of $\tilde{K}_p$ for every $p \in \mcx$.
\end{theorem}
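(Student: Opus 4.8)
The plan is to show that both sides of \eqref{E:SplineGf} reduce to the same explicit expression, $[K(p_1,p),\ldots,K(p_n,p)](K_D)^{-1}y$, where $K_D$ is the Gram matrix with entries $K(p_i,p_j)$; this matrix is invertible precisely because $K_{p_1},\ldots,K_{p_n}$ are assumed linearly independent.

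For the left side, I would appeal to the minimal-norm interpolation result of Theorem \ref{T:splinemin}: the unique $\hat{f}_0 \in H$ of least norm with $\hat{f}_0(p_j) = y_j$ is $\hat{f}_0 = \sum_{j=1}^n b_j K_{p_j}$ with $b = (K_D)^{-1}y$. The reproducing property \eqref{E:reker} then gives $\hat{f}_0(p) = \la K_p, \hat{f}_0\ra = \sum_{j=1}^n b_j K(p_j, p) = [K(p_1,p),\ldots,K(p_n,p)](K_D)^{-1}y$, and a quick check that $\hat{f}_0(p_i) = [K_D b]_i = y_i$ confirms admissibility.

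For the right side, I would apply Proposition \ref{P:CondExpGf} with $h_j = K_{p_j}$ (linearly independent by hypothesis) and $f = \tilde{K}_p$, noting $\tilde{K}_p \in L^2(B,\mu)$ since it is Gaussian of variance $\|K_p\|^2 < \infty$. Writing $F$ for the function $(y_1,\ldots,y_n) \mapsto G\tilde{K}_p\bigl(\bigcap_{j=1}^n[\la K_{p_j},\cdot\ra = y_j]\bigr)$ appearing on the right of \eqref{E:SplineGf}, and using $I(K_{p_j}) = \tilde{K}_{p_j}$ from \eqref{E:KpIKp}, the proposition identifies $F(\tilde{K}_{p_1},\ldots,\tilde{K}_{p_n})$ as a version of $\mbe[\tilde{K}_p \mid \sigma(\tilde{K}_{p_1},\ldots,\tilde{K}_{p_n})]$. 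To evaluate $F$ explicitly I would use that $(\tilde{K}_p,\tilde{K}_{p_1},\ldots,\tilde{K}_{p_n})$ is jointly centered Gaussian, with ${\rm Cov}(\tilde{K}_{p_i},\tilde{K}_{p_j}) = [K_D]_{ij}$ and ${\rm Cov}(\tilde{K}_p,\tilde{K}_{p_j}) = K(p,p_j)$ by \eqref{E:covKpqKqI}, so the Gaussian regression formula of Lemma \ref{L:condExpGauss} forces $F(y) = [K(p_1,p),\ldots,K(p_n,p)](K_D)^{-1}y$. Comparing with the left-side computation then yields \eqref{E:SplineGf}.

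The step requiring the most care is the passage from ``$F$ is a version of the conditional expectation'' to the pointwise identity $F(y) = [K(p_1,p),\ldots,K(p_n,p)](K_D)^{-1}y$ for every $y \in \mbr^n$. Proposition \ref{P:CondExpGf} only guarantees the equality $F(\tilde{K}_{p_1},\ldots,\tilde{K}_{p_n}) = \mbe[\tilde{K}_p\mid\cdots]$ $\mu$-almost surely, which a priori pins $F$ down only on the support of the joint distribution of $(\tilde{K}_{p_1},\ldots,\tilde{K}_{p_n})$. Here the linear-independence hypothesis is essential: it makes $K_D$ nondegenerate, so this joint law is a nondegenerate Gaussian with full support on $\mbr^n$, and since both $F$ and the regression function are continuous (in fact linear) in $y$, they must coincide everywhere. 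This is the one place where the nondegeneracy assumption is genuinely used, and where one should be careful not to conflate the random-variable conditional expectation with its pointwise evaluation at the observed data.
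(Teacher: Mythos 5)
Your proposal is correct and follows essentially the same route as the paper's own argument, which is the discussion immediately preceding the theorem: compute $\mathbb{E}[\tilde{K}_p \mid \tilde{K}_{p_1}=y_1,\ldots,\tilde{K}_{p_n}=y_n]$ by Lemma \ref{L:condExpGauss}, identify the resulting expression $\sum_{j=1}^n b_j K_{p_j}(p)$ with $b=(K_D)^{-1}y$ as $\hat{f}_0(p)$ via Theorem \ref{T:splinemin}, and then invoke Proposition \ref{P:CondExpGf} to rewrite the conditional expectation as the Gaussian Radon transform.

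Where you go beyond the paper is your last paragraph, and you are right to flag the issue: Proposition \ref{P:CondExpGf} and Lemma \ref{L:condExpGauss} together identify $F$ with the linear regression function only up to a null set of the (nondegenerate, hence fully supported) Gaussian law of $(\tilde{K}_{p_1},\ldots,\tilde{K}_{p_n})$, whereas \eqref{E:SplineGf} is a pointwise assertion at the given data vector $y$; the paper glosses over this. However, your patch rests on the claim that $F$ is continuous (indeed linear) in $y$, which you assert but do not justify, and nothing in Proposition \ref{P:CondExpGf} supplies it. The clean justification is to evaluate $F$ directly from the characterization \eqref{E:GRcharIh} of $\mu_L$: writing $L_y = \bigcap_{j=1}^n \left[ \la K_{p_j}, \cdot\ra = y_j \right]$, the random variable $\tilde{K}_p = I(K_p)$ is, under $\mu_{L_y}$, Gaussian with mean $\la K_p, f_{L_y}\ra$, where $f_{L_y}$ is the point of $L_y$ nearest the origin in $H$; hence $F(y) = G\tilde{K}_p(L_y) = \la K_p, f_{L_y}\ra = f_{L_y}(p)$ for \emph{every} $y$, and $f_{L_y}$ is by definition the minimal-norm interpolant. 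But notice that this computation is already the full theorem, proved pointwise with no conditional expectations at all --- it is exactly what the paper does at the end of Appendix \ref{S:geomform}, in the discussion leading to \eqref{E:fhatcp0}, and it does not even require the explicit formula of Theorem \ref{T:splinemin}. So once you invoke it to close the null-set gap, the detour through Proposition \ref{P:CondExpGf} and Lemma \ref{L:condExpGauss} becomes the probabilistic interpretation of the result rather than its proof.
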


The next theorem shows that the ridge regression solution can also be obtained in terms of the Gaussian Radon transform, by taking the Gaussian process approach outlined in the introduction. 

\begin{theorem}\label{T:fhatascp}
Let $H$ be the RKHS over a separable topological space $\mathcal{X}$ with real-valued reproducing kernel $K$ and $B$ be the completion of $H$ with respect to a measurable norm $|\cdot|$ with Wiener measure $\mu$. Let
	\begin{equation} \label{E:trd}
	D = \{ (p_1, y_1), (p_2, y_2), \ldots, (p_n, y_n)\} \subset \mathcal{X} \times \mathbb{R}
	\end{equation}
be fixed and $p\in X$. Let $\{e_1, e_2, \ldots, e_n\} \subset H$ be an orthonormal set    such that:
	\begin{equation}\label{E:es}
	\{e_1, \ldots, e_n\} \subset [\text{span}\{K_{p_1}, \ldots, K_{p_n}, K_p\}]^{\perp}
	\end{equation}
 and for every $1\leq j \leq n$ let $\tilde{e}_j = I(e_j)$ where $I:H\rightarrow L^2(B,\mu)$ is the isometry in \eqref{E:Ilinisom}. Then for any $\lambda > 0$:
	\begin{equation}\label{E:Thm1}
	\mathbb{E}\left[ \tilde{K}_p | \tilde{K}_{p_1} + \sqrt{\lambda}\tilde{e}_1 = y_1, \ldots, \tilde{K}_{p_n} + \sqrt{\lambda}\tilde{e}_n = y_n\right] = \hat{f}_{\lambda}(p)
	\end{equation}
where $\hat{f}_{\lambda}$ is the solution to the ridge regression problem in \eqref{E:RegRegr} with regularization parameter $\lambda$. Consequently:
	\begin{equation} \label{E:Thm2}
	\hat{f}_{\lambda}(p) = G\tilde{K}_p \left( \bigcap_{j=1}^n \left[ \left<K_{p_j} + \sqrt{\lambda} e_j, \cdot\right> = y_j \right] \right)
	\end{equation}
where $G\tilde{K}_p$ is the Gaussian Radon transform of $\tilde{K}_p$.
\end{theorem}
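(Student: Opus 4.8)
The plan is to observe that all the random variables in play are jointly Gaussian and then to reduce \eqref{E:Thm1} to the standard Gaussian conditional-expectation formula recorded in Lemma \ref{L:condExpGauss}. Writing $h_j = K_{p_j} + \sqrt{\lambda}\,e_j$ for $1\le j\le n$, the conditioning variables are $I(h_j) = \tilde{K}_{p_j} + \sqrt{\lambda}\tilde{e}_j$ and the target variable is $\tilde{K}_p = I(K_p)$. Since $I$ is a linear isometry of $H$ into $L^2(B,\mu)$ carrying each element of $H$ to a centered Gaussian random variable, the family $\{I(K_p), I(h_1), \ldots, I(h_n)\}$ is centered and jointly Gaussian. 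For such a family the conditional expectation $\mbe[I(K_p)\mid I(h_1)=y_1,\ldots,I(h_n)=y_n]$ is the linear function $y\mapsto a^\top y$ with $a = A^{-1}v$, where $A = [\mathrm{Cov}(I(h_i),I(h_j))]_{i,j}$ and $v = [\mathrm{Cov}(I(K_p),I(h_j))]_{j}$; this is precisely the content of Lemma \ref{L:condExpGauss}, provided the $h_j$ are linearly independent so that $A$ is invertible.

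The heart of the argument is the evaluation of these two covariance objects, and here the orthogonality hypothesis \eqref{E:es} does all the work. Because $I$ is an isometry, $\mathrm{Cov}(I(h_i),I(h_j)) = \la h_i, h_j\ra$, and on expanding $\la K_{p_i}+\sqrt{\lambda}e_i,\,K_{p_j}+\sqrt{\lambda}e_j\ra$ the two cross terms vanish since each $e_\ell$ is orthogonal to every $K_{p_m}$, while $\la e_i,e_j\ra = \delta_{ij}$ by orthonormality; this yields $A = K_D + \lambda I_n$. Likewise $\mathrm{Cov}(I(K_p),I(h_j)) = \la K_p, K_{p_j}+\sqrt{\lambda}e_j\ra = \la K_p,K_{p_j}\ra = K(p,p_j)$, the noise contribution $\sqrt{\lambda}\la K_p,e_j\ra$ being killed precisely because \eqref{E:es} also requires $e_j\perp K_p$ --- this is the one place where orthogonality to $K_p$, rather than merely to the $K_{p_m}$, is used. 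The required linear independence of the $h_j$ is automatic: if $\sum_j c_j h_j = 0$ then projecting onto $\mathrm{span}\{e_1,\ldots,e_n\}$ gives $\sqrt{\lambda}\sum_j c_j e_j = 0$, so all $c_j = 0$.

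Substituting these covariances into $a = A^{-1}v$ gives $a = (K_D+\lambda I_n)^{-1}[K(p,p_j)]_j$, whence
\[
\mbe[\tilde{K}_p\mid \tilde{K}_{p_j}+\sqrt{\lambda}\tilde{e}_j = y_j,\ 1\le j\le n] = \sum_{j=1}^n\bigl[(K_D+\lambda I_n)^{-1}y\bigr]_j\,K_{p_j}(p).
\]
Comparing with \eqref{E:RegRegrSol}--\eqref{E:RegRegrSola} identifies the right-hand side with $\hat{f}_\lambda(p)$, which is \eqref{E:Thm1}. For \eqref{E:Thm2} I would then invoke Proposition \ref{P:CondExpGf} with $f = \tilde{K}_p$ (square-integrable since $\tilde{K}_p\in L^2(B,\mu)$) and the linearly independent elements $h_j = K_{p_j}+\sqrt{\lambda}e_j$: the proposition identifies the regression function $(y_1,\ldots,y_n)\mapsto G\tilde{K}_p\bigl(\bigcap_j[\la h_j,\cdot\ra = y_j]\bigr)$, evaluated at $(I(h_1),\ldots,I(h_n))$, with the conditional expectation $\mbe[\tilde{K}_p\mid\sigma(I(h_1),\ldots,I(h_n))]$. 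Evaluating both at the observed data $y$ converts \eqref{E:Thm1} into \eqref{E:Thm2}.

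The main obstacle I anticipate is conceptual rather than computational: one must be careful that $\mbe[\,\cdot\mid I(h_1)=y_1,\ldots\,]$ in \eqref{E:Thm1} denotes the \emph{regression function}, the conditional expectation read as a deterministic function of the observed values, so that it matches exactly the object $F$ fed into Proposition \ref{P:CondExpGf}; once this identification is made explicit the two formulations agree. A secondary point to verify cleanly is that Lemma \ref{L:condExpGauss} is applied to centered Gaussian vectors and that the resulting formula coincides with the $L^2(B,\mu)$-orthogonal projection of $\tilde{K}_p$ onto $\mathrm{span}\{I(h_1),\ldots,I(h_n)\}$, so that no spurious additive constant appears.
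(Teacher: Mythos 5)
Your proposal is correct and follows essentially the same route as the paper's own proof: compute the covariance matrix $K_D+\lambda I_n$ and the cross-covariances $K(p,p_j)$ via the isometry $I$ and the orthogonality hypothesis \eqref{E:es}, apply Lemma \ref{L:condExpGauss}, identify the result with the representer-theorem solution \eqref{E:RegRegrSol}--\eqref{E:RegRegrSola}, and then obtain \eqref{E:Thm2} from Proposition \ref{P:CondExpGf}. The only (minor) variation is your justification of invertibility via linear independence of the $h_j=K_{p_j}+\sqrt{\lambda}e_j$ (a clean projection-onto-$\mathrm{span}\{e_j\}$ argument), where the paper instead invokes positive definiteness of $K_D+\lambda I_n$ from its appendix; both are valid, and yours has the small added benefit of explicitly verifying the linear-independence hypothesis of Proposition \ref{P:CondExpGf}.
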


Note that a completely precise statement of the relation (\ref{E:Thm1}) is as follows. Let us for the moment write the quantity ${\hat f}_{\lambda}(p)$, involving the vector $y$, as
$${\hat f}_{\lambda}(p;y).$$
Then
\begin{equation}
{\hat f}_{\lambda}\left(p; \Bigl(\tilde{K}_{p_1} + \sqrt{\lambda}\tilde{e}_1, \ldots, \tilde{K}_{p_n} + \sqrt{\lambda}\tilde{e}_n\Bigr)\right)
\end{equation}
is a version of the conditional expectation
\begin{equation}\label{E:Thm12}
	\mathbb{E}\left[ \tilde{K}_p\, |\, \sigma\left(\tilde{K}_{p_1} + \sqrt{\lambda}\tilde{e}_1, \ldots, \tilde{K}_{p_n} + \sqrt{\lambda}\tilde{e}_n \right)\right]. 
	\end{equation}
	
\begin{proof}
 By our assumption in \eqref{E:es}:
	\begin{eqnarray*}
	\text{Cov}\left( \tilde{K}_{p_i} + \sqrt{\lambda}\tilde{e}_j, \tilde{K}_{p_j} + \sqrt{\lambda}\tilde{e}_j \right) &=& \left< K_{p_i} + \sqrt{\lambda}e_i, K_{p_j} +\sqrt{\lambda}e_j \right>\\
	&=& K(p_i, p_j) + \lambda \delta_{i,j}
	\end{eqnarray*}
for all $1\leq i, j \leq n$. We note that this covariance matrix is invertible (see the argument preceding equation (\ref{E:f0Tb}) below).  Moreover,
	\begin{equation*}
	\text{Cov}\left(\tilde{K}_p, \tilde{K}_{p_j} + \sqrt{\lambda}\tilde{e}_j\right) = K(p_j, p)
	\end{equation*}
for all $1\leq j \leq n$. Then by Lemma \ref{L:condExpGauss}:
	\begin{eqnarray*}
	&&\mathbb{E} \left[ \tilde{K}_p | \tilde{K}_{p_1} + \sqrt{\lambda}\tilde{e}_1 = y_1, \ldots, \tilde{K}_{p_n} + \sqrt{\lambda}\tilde{e}_n = y_n \right] \\
	&&= \left[ K(p_1, p), \ldots ,K(p_n, p) \right](K_D + \lambda I_n)^{-1}y\\
	&&= \sum_{j=1}^n c_j K_{p_j}(p) \text{ where } c = (K_D + \lambda I_n)^{-1}y\\
	&&= \hat{f}_{\lambda}(p)
	\end{eqnarray*}
Finally, \eqref{E:Thm2} follows directly from Proposition \ref{P:CondExpGf}.
\end{proof}

The interpretation of the predicted value in terms of the Gaussian Radon transform allows for quite a broad class of functions that can be considered for prediction. As a simple example, consider the task of predicting the maximum value of an unknown function over a future period using knowledge from the training data. The predicted value would be
\begin{equation}\label{E:GMpredictmax}
GF(L)
\end{equation}
where $L$ is the closed affine subspace of the RKHS reflecting the training data, and $F$ is, for example, a function of the form
\begin{equation}
F(x)=\sup_{p\in S}{\tilde K}_p(x) 
\end{equation}
for some given set $S$ of `future dates'. We note that  the prediction (\ref{E:GMpredictmax}) is, in general,  {\em different} from 
$$\sup_{p\in S}G {\tilde K}_p(L),  
$$
where  $G {\tilde K}_p(L)$ is the SVM prediction as in (\ref{E:Thm2});
in other words, the prediction (\ref{E:GMpredictmax}) is not the same as simply taking the supremum over the  predictions given by the SVM minimizer.
We note also that in this type of problem the Hilbert space, being a function space, is  necessarily infinite-dimensional.

\subsection{An approach using direct sums of abstract Wiener spaces} \label{sS:DirSumAWS}

One disadvantage of Theorem \ref{T:fhatascp} is that the choice of $\{e_1, \ldots, e_n\}$ could change with every training set and every future input $p \in \mathcal{X}$ - specifically, given the training data \eqref{E:trd}, one must choose an orthonormal set $\{e_1, \ldots, e_n\} \subset H$ that is not only orthogonal to each $K_{p_i}$, but also to $K_p$, for every future input $p$ whose outcome we would like to predict. Clearly a set $\{e_1, \ldots, e_n\}$ that would ``universally'' work cannot be found in $H$, because span$\{K_p : p \in \mathcal{X}\}$ is dense in $H$. We would therefore like to attach to $H$ another copy of $H$ which would function as a ``repository'' of errors - so the two spaces would need to be in a sense orthogonal to each other. This is precisely the idea behind direct sums of Hilbert spaces: if $(H_1, \|\cdot\|_1)$ and $(H_2, \|\cdot\|_2)$ are Hilbert spaces, their orthogonal direct sum:
	\begin{equation*}
	H_1 \oplus H_2 \defeq \{ (h_1, h_2) : h_1 \in H_1, h_2 \in H_2 \}
	\end{equation*}
is a Hilbert space with inner-product:
	\begin{equation*}
	\la (h_1, h_2), (g_1, g_2)\ra \defeq \la h_1, h_2\ra_1 + \la h_2, g_2\ra_2,
	\end{equation*}
for all $h_1, g_1 \in H_1$ and $h_2, g_2 \in H_2$.

\begin{prop} \label{P:AWSDirectSum}
Let $(H_1, B_1, \mu_1)$ and $(H_2, B_2, \mu_2)$ be abstract Wiener spaces, where $(H_1, \|\cdot\|_1)$ and $(H_2, \|\cdot\|_2)$ are real separable infinite-dimensional Hillbert spaces and $B_1$, $B_2$ are the Banach spaces obtained by completing $H_1$, $H_2$ with respect to measurable norms $|\cdot|_1$, $|\cdot|_2$, respectively. Then:
	\begin{equation*}
	(H_1 \oplus H_2, B_1 \oplus B_2, \mu_1 \times \mu_2)
	\end{equation*}
is an abstract Wiener space, where $B_1 \oplus B_2 = \{(x_1, x_2) : x_1 \in B_1, x_2 \in B_2\}$ is a separable Banach space with the norm $|(x_1, x_2)| = |x_1|_1 + |x_2|_2$, for all $x_1 \in B_1$, $x_2 \in B_2$.
\end{prop}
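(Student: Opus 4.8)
\section*{Proof proposal}

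The plan is to avoid checking the measurable-norm condition \eqref{E:hepsilon} for the sum norm head-on, and instead to invoke the converse characterization of abstract Wiener spaces recalled in Section \ref{ss:gmbs}: a centered, non-degenerate Gaussian measure on a real separable Banach space always determines an abstract Wiener space whose Hilbert space is the associated Cameron--Martin space, with the Banach norm restricting to a measurable norm there. So I would (i) verify that $B_1\oplus B_2$ is a separable Banach space in which $H_1\oplus H_2$ sits densely, (ii) check that $\mu_1\times\mu_2$ is a centered, non-degenerate Borel Gaussian measure on it, and (iii) compute its Cameron--Martin space, showing it to be exactly $H_1\oplus H_2$ with the orthogonal direct-sum inner product. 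Granting (iii), the quoted converse result gives at once that the sum norm is measurable on $H_1\oplus H_2$ and that the triple is an abstract Wiener space.

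The topological and measure-theoretic preliminaries are routine. Separability of $B_1\oplus B_2$ is inherited from $B_1$ and $B_2$, and density of $H_1\oplus H_2$ follows from $|(x_1,x_2)-(h_1,h_2)| = |x_1-h_1|_1+|x_2-h_2|_2$ together with the density of each $H_i$ in $B_i$. For the measure I would use the identification of $(B_1\oplus B_2)^*$ with $B_1^*\oplus B_2^*$, under which a functional $(x_1^*,x_2^*)$ acts by $(x_1,x_2)\mapsto x_1^*(x_1)+x_2^*(x_2)$; under the product measure these two summands are independent centered Gaussians, so their sum is centered Gaussian with variance $\|x_1^*\|^2_{L^2(\mu_1)}+\|x_2^*\|^2_{L^2(\mu_2)}$, which is positive unless $(x_1^*,x_2^*)=0$. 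Borel-measurability of $\mu_1\times\mu_2$ is automatic for a finite product of separable metric spaces.

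The computational core is (iii). Writing the Cameron--Martin norm $\|\cdot\|$ of $\mu_1\times\mu_2$ as in Section \ref{ss:gmbs}, and using that the $L^2(\mu_1\times\mu_2)$-norm of a functional splits as $\|x_1^*\|^2_{L^2(\mu_1)}+\|x_2^*\|^2_{L^2(\mu_2)}$ (the cross term vanishing by independence and mean zero), I would show that
$$\|(x_1,x_2)\| = \sup\bigl\{x_1^*(x_1)+x_2^*(x_2): \|x_1^*\|^2_{L^2(\mu_1)}+\|x_2^*\|^2_{L^2(\mu_2)}\le 1\bigr\}$$
factorizes: allocating the unit $L^2$-budget between the two factors and optimizing by Cauchy--Schwarz yields $\|(x_1,x_2)\| = (\|x_1\|_1^2+\|x_2\|_2^2)^{1/2}$, where $\|\cdot\|_i$ is the Cameron--Martin norm of $(B_i,\mu_i)$, namely the Hilbert norm of $H_i$. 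This is finite precisely when $x_1\in H_1$ and $x_2\in H_2$, so the Cameron--Martin space is $H_1\oplus H_2$ and, by polarization, its inner product is the orthogonal direct-sum one. As a sanity check one can then read off directly that every $(x_1^*,x_2^*)$ is Gaussian with variance equal to the squared Cameron--Martin norm of its representative, which is the characterization \eqref{E:muB} of Wiener measure.

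The main obstacle is this factorization of the Cameron--Martin norm---in particular justifying that the supremum decouples across the two factors, for which the independence of coordinates under the product measure is essential. It is worth stressing why I route through the Cameron--Martin space rather than verify \eqref{E:hepsilon} directly: a finite-dimensional $F\subset H_1\oplus H_2$ orthogonal to a chosen $F_\epsilon$ need not split as a direct sum $F_1\oplus F_2$, so the coordinate projections of the standard Gaussian $\gamma_F$ onto $H_1$ and $H_2$ are generally non-isotropic (their covariances are only bounded above by the identity), and controlling the sum norm would require an extra contraction/domination argument comparing these tilted Gaussians with standard ones. The Cameron--Martin route sidesteps that difficulty completely.
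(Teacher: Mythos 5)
Your proposal is correct and follows essentially the same route as the paper's own proof: both verify that $\mu_1\times\mu_2$ is a centered, non-degenerate Gaussian measure on the separable Banach space $B_1\oplus B_2$, identify its Cameron--Martin space as $H_1\oplus H_2$ with the direct-sum inner product, and then invoke the converse characterization of abstract Wiener spaces to conclude the triple is an abstract Wiener space and the sum norm is measurable. The only cosmetic difference is that you obtain the Cameron--Martin norm identity in one stroke by allocating the unit $L^2$-budget and applying Cauchy--Schwarz, whereas the paper proves the two inequalities separately (Cauchy--Schwarz for $\|\cdot\|'\leq\|\cdot\|$ on $H_1\oplus H_2$, then testing against the dense family $h_{x^*}$ for the reverse inequality).
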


It then easily follows that if $I : H_1 \oplus H_2 \rightarrow (B_1 \oplus B_2, \mu_1 \times \mu_2)$ is the isometry described in \eqref{E:Ilinisom}, then:
	\begin{equation*}
	I(h_1, h_2)(x_1, x_2) = (I_1h_1)(x_1) + (I_2h_2)(x_2),
	\end{equation*}
where $I_k : H_k \rightarrow L^2(B_k, \mu_k)$ is the corresponding isometry for $k = 1,2$. 

Going back to the ridge regression problem, let now $(H, B, \mu)$ be an abstract Wiener space, where $H$ is the RKHS over a separable topological space $\mathcal{X}$ with reproducing kernel $K$ and $D = \{(p_1, y_1), \ldots, (p_n, y_n)\} \subset \mathcal{X} \times \mathbb{R}$ be our training data. Recall that we would like an ``orthogonal repository'' for the errors - so let $(H', B', \mu')$ be an abstract Wiener space, where $H'$ is a real separable infinite-dimensional Hilbert space. For every $p \in \mathcal{X}$ let $K_p \defeq K(p, \cdot) \in H$ and:
	\begin{equation*}
	\tilde{K}_p \defeq I(K_p, 0) \in L^2(B \oplus B', \mu \times \mu'),
	\end{equation*}
where $I_{\oplus}$ is the isometry in \eqref{E:Ilinisom}. As previously noted:
	\begin{equation*}
	\tilde{K}_p(x, x') = (IK_p)(x) \text{, for almost all } x \in B, x' \in B',
	\end{equation*}
where $I : H \rightarrow L^2(B, \mu)$. 

Let $\lambda > 0$ and $\{e_j\}_{j \geq 1}$ be an orthonormal basis for $H'$ and for every positive integer $j$ let:
	\begin{equation*}
	\tilde{e}_j \defeq I(0, e_j); \tilde{e}_j(x, x') = (I'e_j)(x')
	\end{equation*}
where $I' : H' \rightarrow L^2(B', \mu')$. Then:
	\begin{equation*}
	I(K_p, \sqrt{\lambda}e_j) = \tilde{K}_p + \sqrt{\lambda}\tilde{e}_j
	\end{equation*}
for all $p \in \mathcal{X}$ and $j \in \mathbb{N}$. Then by Lemma \ref{L:condExpGauss} and Proposition \ref{P:CondExpGf}, for any $p \in \mathcal{X}$:
	\begin{eqnarray*}
	\hat{f}_{\lambda} &=& \mathbb{E} [\tilde{K}_p | \tilde{K}_{p_j} + \sqrt{\lambda}\tilde{e}_j = y_j, 1 \leq j \leq n]\\
	&=& G\tilde{K}_p\left( \bigcap_{j=1}^n [\la (K_{p_j}, \sqrt{\lambda}e_j), \cdot\ra = y_j] \right),
	\end{eqnarray*}
where $\hat{f}_{\lambda}$ is the ridge regression solution and both the conditional expectation and the Gaussian Radon transform above are with respect to $(B \oplus B', \mu\times\mu')$. In this approach, for any number $n$ of training points and any future input $p \in \mathcal{X}$ we may simply work with $\{e_1, \ldots, e_n\}\subset H'$, and we no longer have to choose this orthonormal set for every training set and every new input.

We return to the proof of  Proposition \ref{P:AWSDirectSum}.

\begin{proof}
Every continuous linear functional on $B_1 \oplus B_2$ is of the form $x^* = (x_1^*, x_2^*)$ for some $x_1^* \in B_1^*$ and $x_2^* \in B_2^*$, where:
	\begin{equation*}
	((x_1, x_2), (x_1^*, x_2^*)) = (x_1, x_1^*) + (x_2, x_2^*) \text{, for all } x_1\in B_1, x_2 \in B_2.
	\end{equation*}
Since $|\cdot|_1$ and $|\cdot|_2$ are weaker than $\|\cdot\|_1$ and $\|\cdot\|_2$ on $H_1$ and $H_2$, respectively, there is $c > 0$ such that $|h_k|_k \leq c\|h_k\|_k$ for all $h_k \in H_k$ and all $k = 1, 2$. Then:
	\begin{eqnarray*}
	|(h_1, h_2)|^2 &=& (|h_1|_1 + |h_2|_2)^2\\
	&\leq& 2c^2 (\|h_1\|_1^2 + \|h_2\|_2^2) = 2c^2 \|(h_1, h_2)\|^2,
	\end{eqnarray*}
which shows that $|(\cdot, \cdot)|$ is a weaker norm than $\|(\cdot, \cdot)\|$ on $H_1 \oplus H_2$. Consequently, to every $x^* = (x_1^*, x_2^*) \in (B_1 \oplus B_2)^*$ we may associate a unique $h_{x^*} \in H_1 \oplus H_2$ such that $((h_1, h_2), x^*) = \la (h_1, h_2), h_{x^*}\ra$ for all $h_1 \in H_1$, $h_2 \in H_2$. As can be easily seen, this element is exactly $h_{x^*} = (h_{x_1^*}, h_{x_2^*})$, where $h_{x_k^*} \in (H_k)_{B_k^*}$ for $k = 1, 2$. 

The characteristic functional of the measure $\mu_1 \times \mu_2$ on $B_1 \oplus B_2$ is then:
	\begin{equation*}
	\int_{B_1 \oplus B_2} e^{i(x_1^*, x_2^*)}\,d(\mu_1\times\mu_2) = e^{-\frac{1}{2}\|(h_{x_1^*}, h_{x_2^*})\|^2}
	\end{equation*}
for all $(x_1^*, x_2^*)$. Therefore $\mu_1\times\mu_2$ is a centered non-degenerate Gaussian measure with covariance operator:
	\begin{equation*}
	R_{\mu_1\times\mu_2}\left( (x_1^*, x_2^*), (y_1^*, y_2^*) \right) = \la (h_{x_1^*}, h_{x_2^*}), (h_{y_1^*}, h_{y_2^*}) \ra,
	\end{equation*}
for all $x_k^*$, $y_k^*$. Define for $(x_1, x_2)\in B_1 \oplus B_2$:
	\begin{equation}\label{E:normprimeB12}
	\|(x_1, x_2)\|' \defeq \sup_{\stackrel{x_1^* \in B_1^*, x_2^* \in B_2^*}{\|h_{x_1^*}\|_1^2 + \|h_{x_2^*}\|_2^2\leq 1}}\Bigl(|(x_1, x_1^*) + (x_2, x_2^*)|\Bigr)
	\end{equation}
and the Cameron-Martin space $H$ of $(B_1 \oplus B_2, \mu_1 \times \mu_2)$:
	\begin{equation}\label{E:defHB12}
	H \defeq \{ (x_1, x_2) \in B_1 \oplus B_2 : \|(x_1, x_2)\|' < \infty\}.
	\end{equation}
For every $(h_1, h_2) \in H_1 \oplus H_2$ and $(x_1^*, x_2^*) \in (B_1 \oplus B_2)^*$, 
	\begin{eqnarray*}
	|(h_1, x_1^*) + (h_2, x_2^*)| &\leq& |\la h_1, h_{x_1^*}\ra_1| + |\la h_2, h_{x_2^*}\ra_2|\\
	&\leq& \|h_1\|_1 \|h_{x_1^*}\|_1 + \|h_2\|_2\|h_{x_2^*}\|_2\\
	&\leq& \sqrt{\|h_1\|_1^2 + \|h_2\|_2^2} \sqrt{\|h_{x_1^*}\|_1^2 + \|h_{x_2^*}\|_2^*},
	\end{eqnarray*}
so:
	\begin{equation*}
	\|(h_1, h_2)\|' \leq \|(h_1, h_2)\| < \infty,
	\end{equation*}
which shows that $H_1 \oplus H_2 \subset H$. 

Conversely, suppose $(x_1, x_2) \in H$. Then by letting $x_2^* = 0$:
	\begin{equation*}
	\sup_{\stackrel{x_1^* \in B_1^*}{\|h_{x_1^*}\|_1\leq 1}} |(x_1, x_1^*)| \leq \|(x_1, x_2)\|' < \infty,
	\end{equation*}
so $x_1$ is in the Cameron-Martin space of $(B_1, \mu_1)$ - which is just $H_1$. Similarly, $x_2 \in H_2$, proving that $H = H_1 \oplus H_2$ as sets. To see that the norms also correspond, note that for any $y_1^* \in B_1^*$ and $y_2^* \in B_2^*$, not both $0$:
	\begin{eqnarray*}
	\|(h_{y_1^*}, h_{y_2^*})\|' &=& \sup_{\stackrel{x_1^* \in B_1^*, x_2^* \in B_2^*}{\|h_{x_1^*}\|_1^2 + \|h_{x_2^*}\|_2^2\leq 1}} \Bigl(|(h_{y_1^*}, x_1^*) + (h_{y_2^*}, x_2^*)|\Bigr) \\
	&\geq& \frac{|\la h_{y_1^*}, h_{y_1^*}\ra_1 + \la h_{y_2^*}, h_{y_2^*}\ra_2|}{\sqrt{\|h_{y_1^*}\|_1^2 + \|h_{y_2^*}\|_2^2}}\\
	&=& \|(h_{y_1^*}, h_{y_2^*})\|.
	\end{eqnarray*}
So:
	\begin{equation} \label{E:DirSumAWSP1}
	\|h_{x^*}\|' \geq \|h_{x^*}\| \text{, for all } x^* \in (B_1 \oplus B_2)^*.
	\end{equation}
Since $\{h_{x^*}: x^* \in (B_1 \oplus B_2)^*\}$ is dense in both $H$ and $H_1 \oplus H_2$, \eqref{E:DirSumAWSP1} proves that $\|(h_1, h_2)\|' \geq \|(h_1, h_2)\|$ for all $h_1, h_2$, so $H$ and $H_1 \oplus H_2$ are the same as Hilbert spaces. Thus $H_1 \oplus H_2$ is the Cameron-Martin space of $(B_1 \oplus B_2, \mu_1 \times \mu_2)$, which concludes our proof.
\end{proof}

\section{Realizing \texorpdfstring{$B$}{B} as a space of functions}\label{S:Basfunct}

In this section we present some results of a somewhat technical nature to address the question as to whether the elements of the Banach space $B$ can be viewed as functions. 

\subsection{Continuity of \texorpdfstring{$\tilde{K}_p$}{Kp}}\label{ss:contKp} A general measurable norm on $H$ does not `know' about the kernel function $K$ and hence there seems to be no reason why the functionals $\la K_p, \cdot \ra$ on $H$ would be continuous with respect to the norm $|\cdot|$. To remedy this situation we prove   that   there exist measurable norms on $H$ relative to which the  functionals $\la K_p, \cdot\ra$ are continuous for $p$ running along a dense sequence of points in ${\mathcal X}$:

\begin{prop}\label{P:Kpcontmeas} 
Let $H$ be the reproducing kernel Hilbert space associated to a  continuous  kernel function $K:{\mathcal X}\times {\mathcal X}\to\mbr$, where ${\mathcal X}$ is a separable topological space.  Let $D$  be a countable dense subset of ${\mathcal X}$. Then there is a measurable norm $|\cdot|$ on $H$ with respect to which  $\la K_p, \cdot\ra$ is  a continuous linear functional for every $p\in D$.
\end{prop}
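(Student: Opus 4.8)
The plan is to produce the required norm in the form $|h| = \|Ah\|$ for a single, cleverly chosen operator $A$, invoking Fact (3) (equation \eqref{E:HilbSchmidt}). Thus it suffices to construct an injective, positive, self-adjoint Hilbert--Schmidt operator $A$ on $H$ whose range contains $K_p$ for every $p\in D$. Once such an $A$ is available, \eqref{E:HilbSchmidt} guarantees that $|h|=\|Ah\|$ is a measurable norm (it is a genuine norm precisely because $A$ is injective), and for each $p\in D$ we may write $K_p = A\eta_p$ with $\eta_p = A^{-1}K_p\in H$, so that by self-adjointness
$$|\la K_p, h\ra| = |\la A\eta_p, h\ra| = |\la \eta_p, Ah\ra| \le \|\eta_p\|\,\|Ah\| = \|\eta_p\|\,|h|$$
for all $h\in H$. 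This exhibits $\la K_p,\cdot\ra$ as continuous with respect to $|\cdot|$, which is exactly the assertion.

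The only real work is to arrange that all the countably many vectors $K_p$, $p\in D$, lie in the range of one Hilbert--Schmidt operator simultaneously. First I observe that the hypotheses are what make the framework usable at all: since $\mathcal X$ is separable and $K$ is continuous, the discussion following \eqref{E:Phipq} shows that $H$ is separable, hence carries orthonormal bases and Hilbert--Schmidt operators. Enumerate $D=\{p_1,p_2,\ldots\}$ and apply Gram--Schmidt to the sequence $K_{p_1},K_{p_2},\ldots$ (discarding dependent terms) to obtain an orthonormal sequence $\{g_k\}$ with the crucial feature that each $K_{p_j}$ is a \emph{finite} linear combination of $g_1,\ldots,g_{m_j}$; then extend $\{g_k\}$ to an orthonormal basis $\{w_m\}_{m\ge 1}$ of $H$. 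Define $A$ to be diagonal in this basis by $Aw_m = 2^{-m}w_m$. Then $A$ is positive, self-adjoint, and injective, and it is Hilbert--Schmidt since $\sum_m (2^{-m})^2 < \infty$.

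The payoff of the Gram--Schmidt step is that range membership now becomes automatic: each basis vector satisfies $w_m = A(2^m w_m)$ with $2^m w_m\in H$, so $w_m$ lies in the range of $A$; and since that range is a linear subspace, every finite linear combination of the $w_m$ lies in it as well — in particular each $K_{p_j}$ does. Combining this with the first paragraph completes the argument. I expect this Gram--Schmidt reduction to be the main obstacle, in the sense that it is the one non-obvious idea. A direct diagonal construction against a \emph{fixed} orthonormal basis $\{u_n\}$ would instead require eigenvalues $\lambda_n>0$ satisfying both $\sum_n \lambda_n^2 < \infty$ and $\sum_n \lambda_n^{-2}\la K_{p_j}, u_n\ra^2 < \infty$ for every $j$, and these two demands can genuinely conflict when the Fourier coefficients of some $K_{p_j}$ decay slowly. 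Passing to an orthonormal sequence in whose finite span all the $K_{p_j}$ sit dissolves this tension entirely, because finite combinations of range vectors remain range vectors for \emph{any} admissible choice of square-summable eigenvalues.
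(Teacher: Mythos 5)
Your proof is correct and takes essentially the same route as the paper's: Gram--Schmidt applied to $\{K_p\}_{p\in D}$ to produce an orthonormal system in whose finite spans the $K_p$ lie, a diagonal injective Hilbert--Schmidt operator $A$ in that system, and Fact (3) to obtain the measurable norm $|h|=\|Ah\|$. The only cosmetic differences are that the paper first shows $\mathrm{span}\{K_p : p \in D\}$ is dense in $H$ (so Gram--Schmidt directly yields an orthonormal basis, with $Ae_n=\tfrac{1}{n}e_n$) where you instead extend the orthonormal sequence to a basis, and the paper's continuity bound $|\la e_n, f\ra| \leq n|f|$ is exactly your self-adjointness estimate specialized to $\eta = n e_n$.
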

 
\noindent\underline{Proof}.  As noted in the context of (\ref{E:Phipq}), the feature map ${\mathcal X}\to H: q\mapsto K_q$ is continuous. So if $v\in H$ then the map $q\mapsto\la v, K_q\ra$ is continuous and so if $p\in{\mathcal X}$ is a point for which $\la v, K_p\ra\neq 0$ then there is a neighborhood $U$ of $p$ such that $\la v, K_q\ra\neq 0$ for all $q\in U$. Since $D$ is dense in ${\mathcal X}$  we conclude that there is a point $p'\in D$ for which $\la v, K_{p'}\ra\neq 0$.  Turning this argument into its contrapositive, we see that a vector orthogonal to  $K_{p'}$ for every $p'\in D$ is orthogonal to $K_p$ for all $p\in {\mathcal X}$ and hence is $0$  because the span of $\{K_p:p\in{\mathcal X}\}$ is dense in $H$.   Thus $\{K_{p_1}, K_{p_2},\ldots\}$ spans a dense subspace of $H$, where $D=\{p_1, p_2,\ldots\}$. By the Gram-Schmidt process we obtain an orthonormal basis $e_1, e_2, \ldots$ of $H$ such that $K_{p_n}$ is contained in the span of $e_1,\ldots, e_n$, for every $n$.   Now consider the  bounded linear operator
$A:H\to H$ specified  by requiring that $Ae_n=\frac{1}{n}e_n$ for all $n$; this is Hilbert-Schmidt because $\sum_{n}\|Ae_n\|^2<\infty$ and is clearly injective. Hence, by   Property (3) discussed in the context of   \eqref{E:HilbSchmidt},
\begin{equation}\label{E:deffnormAf}
|f|\stackrel{\rm def}{=}\|Af\|=\left[\sum_n\frac{1}{n^2}\la e_n,f\ra^2\right]^{1/2} \quad\hbox{for all $f\in H$}
\end{equation}
specifies a measurable norm on $H$. Then
$$|\la e_n, f\ra|\leq n |f|,$$
from which we see that the linear functional $\la e_n,\cdot\ra$ on $H$ is continuous with respect to the norm $|\cdot|$. Hence, by definition of the linear isometry $I:H\to L^2(B,\mu)$ given in (\ref{E:Ifdef}),  $I(e_n)$ is the element in $B^*$ that agrees with $\la e_n,\cdot\ra$ on $H$. In particular each $I(e_n)$ is continuous and hence $\tilde{K}_{p}=I(K_p)$ is a continuous linear functional on $B$ for every $p\in D$. \fbox{QED}


The measurable norm $|\cdot |$ we have constructed in the preceding proof arises from a (new) inner-product on $H$. However, given any other measurable norm $ |\cdot |_0$ on $H$ the sum 
$$|\cdot|'\stackrel{\rm def}{=} |\cdot |+ |\cdot  |_0$$
is also a measurable norm (not necessarily arising from an inner-product) and  the linear functional $\la K_p, \cdot\ra:H\to\mbr $ is continuous with respect to the norm $|\cdot|'$ for every $p\in D$.

\subsection{Elements of \texorpdfstring{$B$}{B} as functions} \label{ss:Basfunct} If a  Banach space $B$  is obtained by completing a Hilbert space $H$ of functions, the elements of $B$ need not   consist of functions. However, when $H$ is a reproducing kernel Hilbert space as we have been discussing and under reasonable conditions on the reproducing  kernel function $K$ it is true that elements of $B$ can `almost' be thought of as functions on ${\mathcal X}$.  For this we first develop a lemma:

    \begin{lemma}\label{L:BB0} Suppose $H$ is a separable  real Hilbert space and $B$ the Banach space obtained by completing $H$ with respect to a measurable norm $|\cdot|$. Let $B_0$ be a closed subspace of $B$ that is transverse to $H$ in the sense that $H\cap B_0=\{0\}$, and let
    \begin{equation}\label{E:Bstar}
    B_{1}=B/B_0=\{b+B_0\,:\,b\in B\}
    \end{equation}
    be the quotient Banach space, with the standard quotient norm, given by
    \begin{equation}\label{E:normstar}
    |b+B_0|_{1}\stackrel{\rm def}{=}\inf\{|x|\,:\, x\in b+B_0\}.
    \end{equation}
    Then the mapping
  \begin{equation}\label{E:istar}
  \iota_1:H\to B/B_0:h\mapsto  \iota(h)+B_0,
  \end{equation}    where $\iota:H\to B$ is the inclusion,  is a continuous linear injective map, and 
    \begin{equation}\label{E:normstar1}
    |h|_{1}\stackrel{\rm def}{=}|\iota_1(h)|_1 \quad\hbox{for all $h\in H$,}
    \end{equation} 
    specifies a measurable norm on $H$.  The image of $H$ under $\iota_1$ is a dense subspace of $B_1$. \end{lemma}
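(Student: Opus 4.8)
The plan is to verify the four claims in the lemma in sequence, each reducing to a standard property of quotient Banach spaces combined with the transversality hypothesis $H \cap B_0 = \{0\}$. First I would establish that $\iota_1$ is linear and continuous: linearity is immediate since both the inclusion $\iota$ and the quotient map $b \mapsto b + B_0$ are linear, and continuity follows from the general fact that the quotient norm satisfies $|b + B_0|_1 \leq |b|$, so that $|\iota_1(h)|_1 \leq |\iota(h)| = |h|$ for all $h \in H$. Injectivity is where transversality enters directly: if $\iota_1(h) = B_0$, then $\iota(h) \in B_0$, and since $\iota(h) \in H$ (identifying $H$ with its image in $B$) and $H \cap B_0 = \{0\}$, we conclude $\iota(h) = 0$, hence $h = 0$ because $\iota$ is itself injective.

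Next I would show that $|\cdot|_1$ defined in \eqref{E:normstar1} is genuinely a norm on $H$. Homogeneity and the triangle inequality are inherited from the quotient norm $|\cdot|_1$ on $B_1$ via the linear map $\iota_1$, and positive-definiteness follows precisely from the injectivity just established: $|h|_1 = 0$ forces $\iota_1(h) = 0$, hence $h = 0$. The heart of the lemma is then showing that $|\cdot|_1$ is a \emph{measurable} norm on $H$. Here the key inequality is $|h|_1 \leq |h|$ for all $h \in H$, which I obtained above. Since $|\cdot|$ is itself a measurable norm, for any $\epsilon > 0$ there is a finite-dimensional subspace $F_\epsilon$ of $H$ such that $\gamma_F\{h \in F : |h| > \epsilon\} < \epsilon$ for every finite-dimensional $F \perp F_\epsilon$; because $|h|_1 \leq |h|$, the set $\{h \in F : |h|_1 > \epsilon\}$ is contained in $\{h \in F : |h| > \epsilon\}$, so the same $F_\epsilon$ witnesses measurability of $|\cdot|_1$. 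This is the main conceptual point, though it turns out to be short: a norm dominated by a measurable norm is again measurable, by the monotonicity built into the defining condition \eqref{E:hepsilon}.

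Finally I would verify that $\iota_1(H)$ is dense in $B_1$. Since $H$ is dense in $B$ under $|\cdot|$ (by construction of $B$ as the completion of $H$), and the quotient map $\pi : B \to B_1$ is continuous and surjective, the image $\pi(\iota(H)) = \iota_1(H)$ is dense in $\pi(B) = B_1$: continuous surjections carry dense sets to dense sets. I expect the density step and the continuity/injectivity steps to be essentially routine; \textbf{the step requiring the most care is confirming that $|\cdot|_1$ is measurable}, but as sketched this reduces to the observation that the measurability condition is monotone in the norm, so no delicate estimate beyond $|h|_1 \le |h|$ is needed. The one subtlety to state carefully is the identification of $H$ with its image $\iota(H) \subset B$ when invoking $H \cap B_0 = \{0\}$, ensuring that transversality is applied to the correct copy of $H$ inside $B$.
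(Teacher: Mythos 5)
Your proposal is correct and follows essentially the same route as the paper's proof: the key inequality $|h|_1 \leq |h|$ from the quotient norm, measurability via the monotonicity of the defining condition \eqref{E:hepsilon}, and density of $\iota_1(H)$ from the continuity and surjectivity of the quotient map (the paper phrases this contrapositively with open sets, but it is the same fact). Your explicit verification of injectivity from $H \cap B_0 = \{0\}$ and of the norm axioms for $|\cdot|_1$ is left implicit in the paper but is exactly the intended reasoning.
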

    \noindent\underline{Proof}. Let us first note that by definition of the quotient norm
    $$|b+B_0|_1\leq |b|\qquad\hbox{for all $b\in B$.}$$
    Hence
    $$|h|_1\leq   |h|\qquad\hbox{ for all $h\in H\subset  B$.}$$
    Let $\epsilon>0$. Then since $|\cdot|$ is a measurable norm on $H$ there is a finite-dimensional subspace $F_\epsilon\subset H$ such that if $F$ is any finite-dimensional subspace of $H$   orthogonal to $F_\epsilon$ then, as noted back in \eqref{E:hepsilon}, 
    $$\gamma_F\{ h\in F\,:\, |h|>\epsilon\}<\epsilon,$$
    where $\gamma_F$ is standard Gaussian measure on $F$.
    Then
    \begin{equation}\begin{split}
    \gamma_F\{ h\in F\,:\, |h|_1>\epsilon\} &\leq   \gamma_F\{ h\in F\,:\, |h| >\epsilon\}\\
    &<\epsilon,
    \end{split}
    \end{equation}
    where the first inequality holds because whenever $|h|_1>\epsilon$ we also have $|h|\geq |h|_1>\epsilon$. Thus, $|\cdot|_1$ is a measurable norm on $H$.   The image $\iota_1(H)$ is the same as the projection of the dense subspace $H\subset B$ onto the quotient space $B_1= B/B_0$ and hence  this image is dense in $B_1$ (an open set in the complement of $\iota_1(H)$ would have inverse image in $B$ that is in the complement of $H$, and would have to be empty because $H$ is dense in $B$).
     \fbox{QED}
    
    We can now establish the identification of $B$ as a function space.
    
 \begin{prop}\label{P:BasFunct}  Let $K:{\mathcal X}\times {\mathcal X}\to\mbr$ be a continuous function, symmetric and non-negative definite, where $\mathcal{X}$ is a separable topological space, and $D$ a countable dense subset of ${\mathcal X}$. Let $H$ be the corresponding reproducing kernel Hilbert space. Then there is a measurable norm $|\cdot|_1$ on $H$ such that   the Banach space $B_1$ obtained by completing $H$ with respect to $|\cdot|_1$ can be realized as a space of functions on the set $D$. \end{prop}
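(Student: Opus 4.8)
The plan is to start from the measurable norm produced in Proposition \ref{P:Kpcontmeas} and to realize elements of the resulting Banach space as functions on $D$ by means of the evaluation functionals $\tilde{K}_p$, after quotienting out those elements that are ``invisible'' to every such functional. The clean statement of the quotient construction is exactly Lemma \ref{L:BB0}, so the whole proof amounts to choosing the right subspace $B_0$ and verifying its transversality to $H$.

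First I would invoke Proposition \ref{P:Kpcontmeas} to fix a measurable norm $|\cdot|$ on $H$ for which $\langle K_p,\cdot\rangle$ is continuous for every $p\in D$, and let $B$ be the completion of $H$ under $|\cdot|$. By the discussion following (\ref{E:Ifdef}), for each $p\in D$ the random variable $\tilde{K}_p=I(K_p)$ is then the unique continuous linear functional on $B$ agreeing with $\langle K_p,\cdot\rangle$ on $H$; in particular $\tilde{K}_p\in B^*$, and for $f\in H$ one has $\tilde{K}_p(f)=\langle K_p,f\rangle=f(p)$. This already indicates the natural realization: send $x\in B$ to the function $p\mapsto \tilde{K}_p(x)$ on $D$, which for $x=f\in H$ recovers the restriction $f|_D$.

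Next I would set $B_0=\bigcap_{p\in D}\ker\tilde{K}_p$, a closed subspace of $B$ since each $\tilde{K}_p$ is continuous. The key technical point is transversality, $H\cap B_0=\{0\}$, which is the hypothesis needed to apply Lemma \ref{L:BB0}. An element of $H\cap B_0$ is an $f\in H$ with $f(p)=\langle K_p,f\rangle=0$ for all $p\in D$; the contrapositive density argument already carried out in the proof of Proposition \ref{P:Kpcontmeas} shows that a vector orthogonal to $K_p$ for every $p\in D$ is orthogonal to $K_p$ for all $p\in\mathcal{X}$, hence is $0$ because $\{K_p:p\in\mathcal{X}\}$ spans a dense subspace of $H$. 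Thus $B_0$ is transverse to $H$, and Lemma \ref{L:BB0} produces the quotient Banach space $B_1=B/B_0$, the measurable norm $|h|_1=|\iota_1(h)|_1$ on $H$, and the density of $\iota_1(H)$ in $B_1$, so that $B_1$ is exactly the completion of $H$ with respect to $|\cdot|_1$.

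Finally I would verify the realization. Each $\tilde{K}_p$ vanishes on $B_0$ by construction, hence descends to a continuous functional $\tilde{K}_p^{1}$ on $B_1$, and the map $B_1\to\mathbb{R}^D:\,b_1\mapsto\bigl(\tilde{K}_p^{1}(b_1)\bigr)_{p\in D}$ is injective, since if all coordinates vanish then any representative $x$ lies in $\bigcap_{p\in D}\ker\tilde{K}_p=B_0$, forcing $b_1=0$. This injection identifies every element of $B_1$ with a genuine function on $D$, and on the dense image $\iota_1(H)$ it sends $f$ to $f|_D$, so the identification extends the obvious one on $H$. I expect the only real obstacle to be the transversality verification; everything else is a matter of assembling Proposition \ref{P:Kpcontmeas} and Lemma \ref{L:BB0}, and the transversality itself reduces cleanly to the density of $\{K_p\}$ that was already exploited earlier.
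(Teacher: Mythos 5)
Your proposal is correct and follows essentially the same route as the paper's own proof: the same measurable norm from Proposition \ref{P:Kpcontmeas}, the same subspace $B_0=\bigcap_{p\in D}\ker\tilde{K}_p$ with the same density-based transversality check, the same application of Lemma \ref{L:BB0}, and the same injective evaluation map $b_1\mapsto\bigl(\tilde{K}^1_p(b_1)\bigr)_{p\in D}$ realizing $B_1$ as functions on $D$. The only cosmetic difference is that you phrase injectivity via a trivial kernel while the paper compares two representatives $b,c\in B$ directly, which amounts to the same argument by linearity.
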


 \noindent\underline{Proof}. Let $B$ be the completion of $H$ with respect to a measurable norm $|\cdot|$ of the type given in Proposition \ref{P:Kpcontmeas}. Thus $\la K_p, \cdot\ra:H\to\mbr$ is continuous  with respect to $|\cdot|$  when $p\in D$; let
 $$\tilde{K}_p:B\to\mbr$$
 be the continuous linear extension of $K_p$ to the Banach space $B$, for $p\in D$. 
   Now let
 \begin{equation}\label{E:defB0}
 B_0\stackrel{\rm def}{=}\cap_{p\in D}\ker \tilde{K}_p,
 \end{equation}
  a closed subspace of $B$.  We observe that $B_0$ is transverse to $H$; for if $x\in B_0$ is in $H$ then $\la K_p,x\ra=0$ for all $p\in D$ and so $x=0$ since $\{K_p:p\in D\}$ spans a dense subspace of $H$  as noted in Theorem \ref{T: covHilb} and the remark following it.   Then by   Lemma \ref{L:BB0}, $B_{1}=B/B_0$ is a Banach space that is a completion of  $H$ in the sense that $\iota_1:H\to B_1:h\mapsto h+B_0$ is continuous linear with dense image and $|h|_1\stackrel{\rm def}{=}|\iota_1(h)|_1$, for $h\in H$,  specifies a measurable norm on $H$. Let $K^1_p$ be the  linear functional on $B_1$ induced by $\tilde{K}_p$:
  \begin{equation}\label{E:defKpB1}
  K^1_p(b+B_0)\stackrel{\rm def}{=}\tilde{K}_p(b)\qquad\hbox{for all $b\in B$,}
  \end{equation} 
  which is well-defined because the linear functional $\tilde{K}_p$ is $0$ on $B_0$.  We note that
  $$K^1_p\bigl(\iota_1(h)\bigr)= K^1_p(h+B_0)=\tilde{K}_p(h)=\la K_p, h\ra $$
  for all $h\in H$, and so $K^1_p$ is the continuous linear `extension' of $\la K_p, \cdot\ra $ to $B_1$ through  $\iota_1:H\to B_1$, viewed as an `inclusion' map.  
  
  Now to each $b_1 \in B_1$ associate the function $f_{b_1}$ on $D$ given by:
		$$f_{b_1}: D \rightarrow \mathbb{R}: p \mapsto K^1_p(b_1).$$
		We will shows that the mapping
		$$b_1\mapsto f_{b_1}$$
		is injective; thus it realizes $B_1$ as a set of functions on the set $D$. To this end, 
	suppose that
  $$f_{b+B_0}=f_{c+B_0}$$
  for some $b, c\in B$. This means  
    $$K^1_p(b+B_0)=K^1_p(c+B_0)\qquad\hbox{for all $p\in D$,}$$
 and so $\tilde{K}_p(b) = \tilde{K}_p(c)$ for all $p \in D$. Then 
    $$b-c\in \ker \tilde{K}_p\qquad\hbox{for all $p\in D$.}$$
  Thus $b-c\in B_0$ and so $b+B_0=c+B_0$. Thus we have shown that $b_1\mapsto f_{b_1}$ is injective.  \fbox{QED}

 We have defined the function $f_b$ on the set $D\subset {\mathcal X}$, with notation and hypotheses as above. Now taking a general point $p\in {\mathcal X}$ and a sequence of points $p_n\in D$ converging to $p$ the function
 $\tilde{K}_p$ on $B$  is the $L^2(\mu)$-limit  the sequences of functions $\tilde{K}_{p_n}$. Thus we can define
 $ f_b(p)=\tilde{K}_p(b)$, with the understanding that for a given $p\in {\mathcal X}$, the value $f_b(p)$ is $\mu$-almost-surely defined in terms of its dependence on $b$. In the theory of Gaussian random fields one has conditions on the covariance function $K$ that ensure that $D\times B\to\mbr: (p,b)\mapsto \tilde{K}_p(b)$ is continuous in $p$ for $\mu$-a.e. $b\in B$, and in this case the function $f_b$ extends uniquely to a continuous function on ${\mathcal X}$, for $\mu$-almost-every $b\in B$.

\appendix

\section{A geometric formulation}\label{S:geomform}

In this section we present a geometric view of    the relationship between the Gaussian Radon transform and the representer theorem used in support vector machine theory; thus, this will be a geometric interpretation of Theorem \ref{T:fhatascp}. 

Given a reproducing kernel Hilbert space $H$ of functions defined on a set ${\mathcal X}$ with reproducing kernel $K: \mathcal{X}\times\mathcal{X}\rightarrow\mathbb{R}$, we wish to find a function $\hat{f}_{\lambda}\in H$ that minimizes the functional
\begin{equation}\label{E:Elamdf}
E_{\lambda}(f)=\sum_{j=1}^n\left[y_j-f(p_j)\right]^2+{\lambda}\|f\|^2,
\end{equation}
where $p_1,\ldots, p_n$ are given points in ${\mathcal X}$, $y_1,\ldots, y_n$ are given values in $\mbr$, and $\lambda>0$ is a parameter.  

Our first goal in this section is to present a  geometric   proof  of the following   representer theorem widely used in support vector machine theory. The result  has its  roots in the work of Kimeldorf and Wahba \cites{Wahba1, Wahba2} (for example, \cite[Lemmas 2.1 and 2.2]{Wahba1}) in the context of splines; in this context it is also worth noting the work of de Boor and Lynch  \cite{deBLyn} where Hilbert space methods were used to study splines.

\begin{theorem}\label{T:svmminim} With notation and hypotheses as above, there is a unique $\hat{f}_{\lambda}\in H$ such that $E_{\lambda}(\hat{f}_{\lambda})$ is $\inf_{f\in H}E_{\lambda}(f)$. Moreover, $\hat{f}_{\lambda}$ is given explicitly by
\begin{equation}\label{E:f0wahba}
\hat{f}_{\lambda}=\sum_{i=1}^n c_i K_{p_i}
\end{equation}
where the vector $c\in\mbr^n$ is $(K_D+\lambda I_n)^{-1}y$, with $K_D$ being the $n\times n$ matrix with entries $[K_D]_{ij}=K(p_i,p_j)$ and $y=(y_1,\ldots, y_n)\in\mbr^n$.
\end{theorem}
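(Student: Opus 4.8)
The plan is to prove existence, uniqueness, and the explicit formula by exploiting the orthogonal splitting of $H$ induced by the finite-dimensional subspace spanned by the kernel sections. First I would let $V = \mathrm{span}\{K_{p_1}, \ldots, K_{p_n}\}$, a finite-dimensional subspace of $H$, and write any $f \in H$ as $f = f_V + f_\perp$, where $f_V \in V$ and $f_\perp \in V^\perp$. The crucial observation is that $f(p_j) = \la K_{p_j}, f\ra = \la K_{p_j}, f_V\ra$ for each $j$, since $K_{p_j} \in V$ kills the orthogonal component; thus the data-fitting term $\sum_j [y_j - f(p_j)]^2$ depends only on $f_V$, while the penalty satisfies $\|f\|^2 = \|f_V\|^2 + \|f_\perp\|^2 \geq \|f_V\|^2$. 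Therefore $E_\lambda(f) \geq E_\lambda(f_V)$ for every $f$, with equality forcing $f_\perp = 0$ (as $\lambda > 0$). This reduces the minimization to the finite-dimensional subspace $V$ and already shows that any minimizer must lie in $V$, i.e.\ must be of the form $\sum_i c_i K_{p_i}$, which is the representer conclusion.

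The next step is to solve the reduced problem on $V$. Writing $f = \sum_i c_i K_{p_i}$ and using the reproducing property, I would compute $f(p_j) = \sum_i c_i K(p_i, p_j) = (K_D c)_j$ and $\|f\|^2 = \sum_{i,k} c_i c_k \la K_{p_i}, K_{p_k}\ra = c^\top K_D c$. Substituting these into $E_\lambda$ turns it into a quadratic function of $c \in \mathbb{R}^n$:
\begin{equation*}
E_\lambda(c) = \|y - K_D c\|^2 + \lambda\, c^\top K_D c.
\end{equation*}
Setting the gradient to zero yields $-2 K_D(y - K_D c) + 2\lambda K_D c = 0$, i.e.\ $K_D(K_D + \lambda I_n) c = K_D y$. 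If $K_D$ is invertible this immediately gives $c = (K_D + \lambda I_n)^{-1} y$. To handle the general case where $K_D$ may be singular, I would instead reparametrize by the vector of fitted values $v = K_D c$ ranging over the column space of $K_D$, or equivalently express the quadratic directly in an orthonormal basis of $V$ so that the Hessian is manifestly positive definite; then the unique stationary point is the minimizer and one checks it corresponds to the stated $c$.

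Uniqueness would follow from strict convexity: the functional $E_\lambda$ is a sum of a convex data term and the strictly convex penalty $\lambda\|f\|^2$ (strict because $\lambda>0$ and $\|\cdot\|^2$ is strictly convex on the Hilbert space), so the minimizer, once shown to exist, is unique on all of $H$. I expect the main obstacle to be the case when $K_D$ is not invertible, since then the formula $(K_D + \lambda I_n)^{-1} y$ must be justified carefully: here I would rely on the fact that $K_D + \lambda I_n$ is always positive definite (its eigenvalues are those of the positive semidefinite $K_D$ shifted up by $\lambda > 0$), so the inverse exists even when $K_D$ alone does not, and verify that the resulting $c$ satisfies the stationarity equation $K_D(K_D+\lambda I_n)c = K_D y$ by left-multiplying $(K_D+\lambda I_n)c = y$ by $K_D$. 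Since the problem statement asks for a \emph{geometric} proof, I would phrase the orthogonal decomposition argument as the conceptual heart and relegate the matrix computation to a verification, emphasizing that the penalty term's role is precisely to make the orthogonal component vanish, forcing the solution into the span of the kernel sections.
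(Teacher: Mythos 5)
Your proof is correct, but it takes a genuinely different route from the paper's. You use the classical representer-theorem argument: split $H = V \oplus V^{\perp}$ with $V = \mathrm{span}\{K_{p_1},\ldots,K_{p_n}\}$, observe that the data term sees only $f_V$ while the penalty strictly increases in $\|f_{\perp}\|$, reduce to a convex quadratic in the coefficient vector $c$, and solve the stationarity equation $K_D(K_D+\lambda I_n)c = K_D y$ --- correctly handling singular $K_D$ by noting that $c=(K_D+\lambda I_n)^{-1}y$ always solves it, while strict convexity of $E_{\lambda}$ on $H$ makes the minimizer (as a function, not as a coefficient vector) unique. The paper instead rescales the inner product to $\langle f,g\rangle_{H_{\lambda}}=\lambda\langle f,g\rangle$, introduces $T:\mathbb{R}^n\to H_{\lambda}$ with $Te_i=\lambda^{-1}K_{p_i}$, and rewrites $E_{\lambda}(f)=\|(0,y)-(f,T^*f)\|^2_{H_{\lambda}\oplus\mathbb{R}^n}$, so the problem becomes finding the point of the closed subspace $\mathrm{Gr}(T^*)$ nearest to $(0,y)$; existence, uniqueness, and the formula then come in one stroke from the Hilbert-space projection theorem together with the orthogonality characterization $\left[\mathrm{Gr}(T^*)\right]^{\perp}=\{(-Tc,c)\,:\,c\in\mathbb{R}^n\}$, yielding $\hat{f}_{\lambda}=T\left[(T^*T+I_n)^{-1}y\right]$ with $T^*T+I_n=\lambda^{-1}(K_D+\lambda I_n)$ automatically invertible, so the degenerate case never requires separate treatment; the same computation also gives the minimum value of $E_{\lambda}$ as a byproduct (equation \eqref{E:Ef0min}). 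Your route is more elementary and makes the mechanism transparent (the penalty forces the component orthogonal to $V$ to vanish); the paper's graph-of-adjoint geometry, in the spirit of von Neumann, buys a cleaner simultaneous treatment of existence, uniqueness, and the explicit formula without ever confronting the non-uniqueness of coefficient representations.
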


\begin{proof}  It will be convenient to scale the inner-product $\la \cdot, \cdot \ra$ of $H$ by $\lambda$. Consequently, we denote by $H_{\lambda}$ the space $H$ with inner-product:
\begin{equation}
\la f,g \ra_{H_{\lambda}}=\lambda\la f,g \ra \text{, for all } f,g\in H.
\end{equation}
We shall use the linear mapping
\begin{equation}\label{E:defT}
T:\mbr^n\to H_{\lambda}
\end{equation}
that maps $e_i$ to $\lambda^{-1} K_{p_i}$ for $i\in\{1,\ldots, n\}$, where $\{e_1,\ldots, e_n\}$ is the standard basis of $\mbr^n$:
$$T(e_i)=\lambda^{-1}K_{p_i}\qquad\hbox{for $i\in\{1,\ldots, n\}$.}
$$
We observe then that for any $f\in H_{\lambda}$
\begin{equation}
\la T^*f, e_i\rangle_{\mbr^n}=\la f, Te_i\ra_{H_{\lambda}}=\lambda\la f, \lambda^{-1}K_{p_i}\ra=f(p_i)
\end{equation}
for each $i$, and so
\begin{equation}\label{E:Tstarf}
T^*f=\sum_{j=1}^nf(p_j)e_j.
\end{equation}
Consequently, we can rewrite $E_{\lambda}(f)$ as
\begin{equation}\label{E:Ef}
E_{\lambda}(f)=\|y-T^*f\|_{\mbr^n}^2+ \|f\|_{H_{\lambda}}^2,
\end{equation}
and from this we see that $E_{\lambda}(f)$ has a geometric meaning as the distance from the point $(f, T^*f)\in H_{\lambda}\oplus \mbr^n $ to the point $(0,y)$ in $H_{\lambda}\oplus \mbr^n$:
\begin{equation}\label{E:Ef2}
E_{\lambda}(f)=\|(0,y)-(f,T^*f)\|_{H_{\lambda}\oplus \mbr^n}^2.
\end{equation}
Thus the minimization problem for $E_{\lambda}(\cdot)$ is equivalent to finding the point on the subspace
\begin{equation}
M=\{(f,T^*f)\,:\,f\in H_{\lambda}\}\subset H_{\lambda}\oplus\mbr^n
\end{equation}
closest to $(0,y)$. Now the subspace $M$ is just the graph  ${\rm Gr}(T^*)$ and it   is a closed subspace of $H_{\lambda}\oplus\mbr^n$ because it is the orthogonal complement of a subspace (as we see below in \eqref{E:GrTperp}). Hence by standard Hilbert space theory there is indeed a unique point on $M$ that is closest to $(0,y)$, and this point is in fact of the form
\begin{equation}\label{E:abf0}
(\hat{f}_{\lambda},T^*\hat{f}_{\lambda})=(0,y)+(a,b),
\end{equation}
where the vector $(a,b)\in H_{\lambda}\oplus \mbr^n$ is orthogonal to $M$. Now the condition for orthogonality to $M$ means that
$$\la (a,b), (f,T^*f)\ra_{H_{\lambda}\oplus \mbr^n}=0\qquad\hbox{for all $f\in H$,}$$
and this is equivalent to
$$0= \la a, f\ra_{H_{\lambda}}+\la b, T^*f\ra_{\mbr^n}=\la a+Tb,f\ra_{H_{\lambda}} = \lambda \la a+Tb, f\ra$$
for all $f\in H$. Therefore
\begin{equation}
a+Tb=0.
\end{equation}
Thus,
\begin{equation}\label{E:GrTstarperp}
\left[{\rm Gr}(T^*)\right]^\perp=\{(-Tc,c)\,:\,c\in\mbr^n\}.
\end{equation}
Conversely, we can check directly that
\begin{equation}\label{E:GrTperp}
 {\rm Gr}(T^*) =\{(-Tc,c)\,:\,c\in\mbr^n\}^\perp.
\end{equation}

\begin{figure}\label{Fig}
\centering
\begin{minipage}{.5\textwidth}
  \centering
  \includegraphics[width=\linewidth]{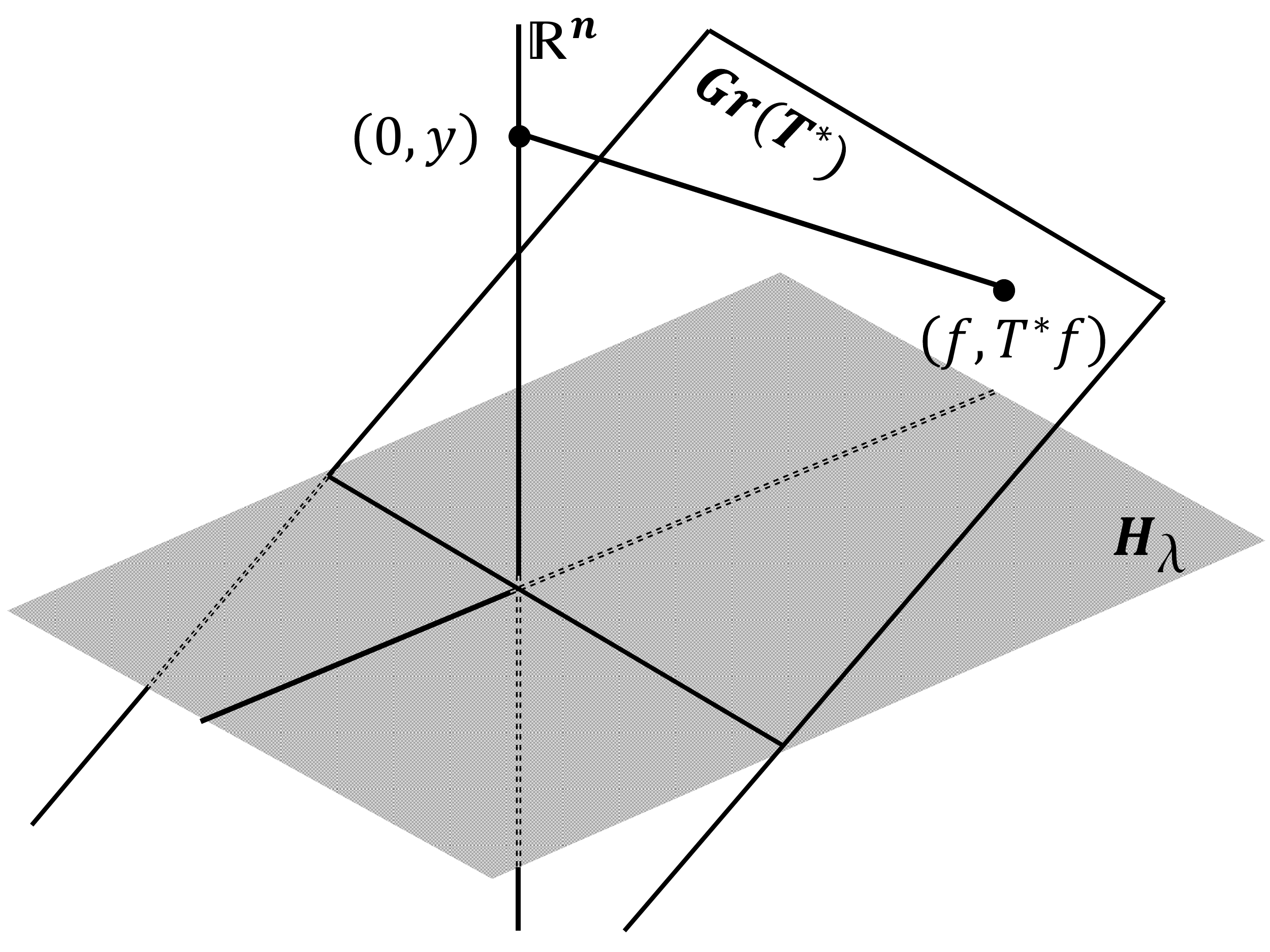}
\end{minipage}%
\begin{minipage}{.5\textwidth}
  \centering
  \includegraphics[width=\linewidth]{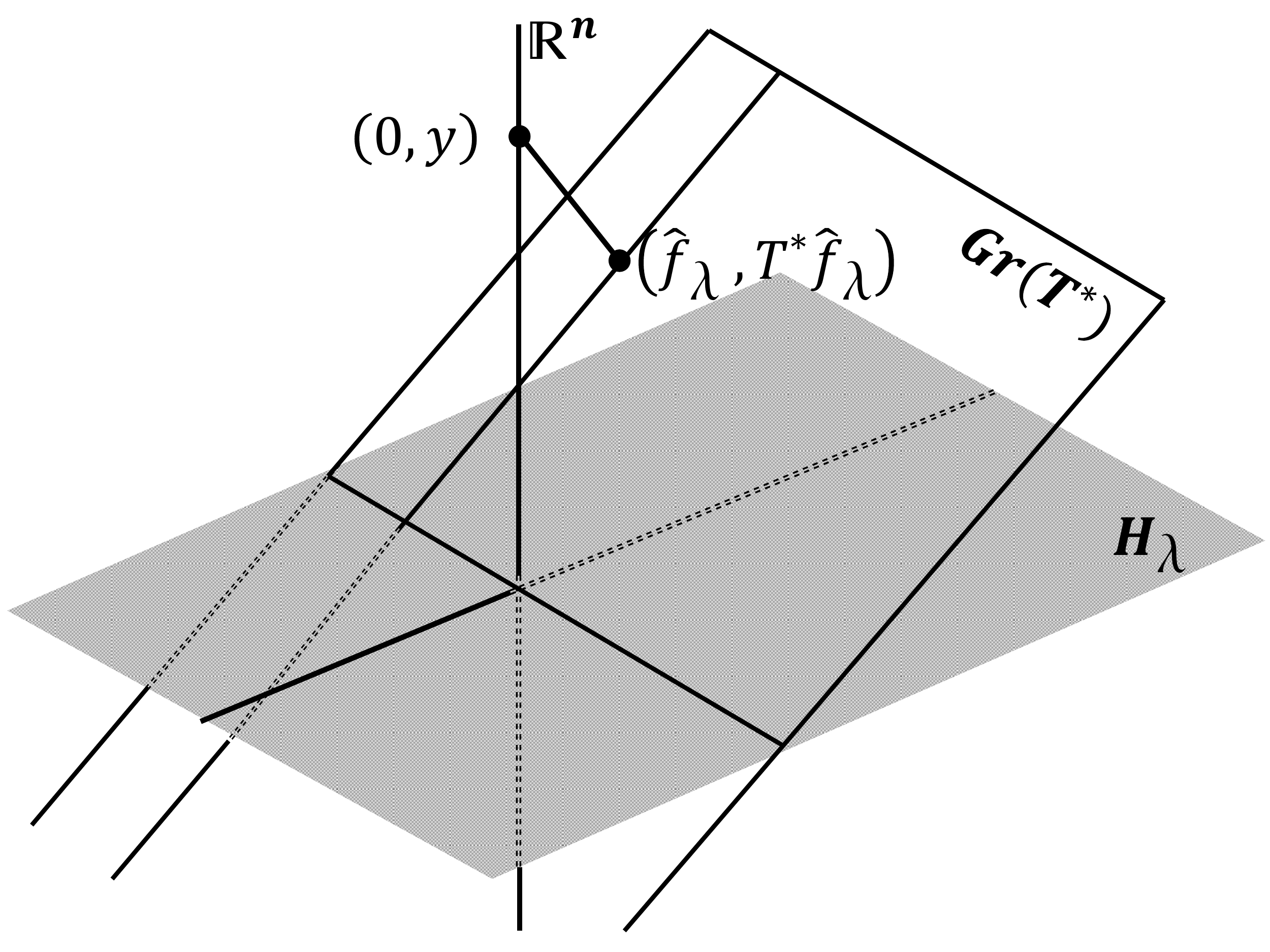}
\end{minipage}
\caption{\small{A geometric interpretation of Theorem \ref{T:svmminim}}}
\end{figure}

Returning to (\ref{E:abf0}) we see that the point on $M$ closest to $(0,y)$ is of the form
\begin{equation}\label{E:f0abTab}
(\hat{f}_{\lambda},T^*\hat{f}_{\lambda})=(a,y+b)= (-Tb,y+b)
\end{equation}
for some $b\in\mbr^n$. Since the second component is $T^*$ applied to the first, we have
$$y+b=-T^*Tb,$$
and solving for $b$ we obtain
\begin{equation}\label{E:bTstarTy}
b=-(T^*T+I_n)^{-1}y.
\end{equation}
Note that the operator $T^*T+I_n$ on $\mbr^n$  is invertible, since  $\la(T^*T+I_n)w,w\ra_{\mbr^n}\geq \|w\|_{\mbr^n}^2$, so that if $(T^*T+I_n)w=0$ then $w=0$.
Then from \eqref{E:f0abTab} we have $\hat{f}_{\lambda}=a=-Tb$ given by
\begin{equation}\label{E:f0Tb}
\hat{f}_{\lambda}=T\left[(T^*T+I_n)^{-1}y \right].
\end{equation}
Now we just need to write this in coordinates. The matrix for $T^*T$ has entries
\begin{equation}
\begin{split}
\la (T^*T)e_i,e_j\ra_{\mbr^n} &=\la Te_i,Te_j\ra_{\mbr^n}\\
&= \la \lambda^{-1}K_{p_i}, \lambda^{-1}K_{p_j}\ra_{H_{\lambda}}\\
&= \lambda\la \lambda^{-1}K_{p_i}, \lambda^{-1}K_{p_j}\ra \\
&=\lambda^{-1}[K_D]_{ij},
\end{split}
\end{equation}
and so
$$\hat{f}_{\lambda}=T\left[(T^*T+I_n)^{-1}y \right]=T\left[\sum_{i,j=1}^n(I_n+\lambda^{-1}K_D)^{-1}_{ij}y_je_i\right].$$
Since $Te_i=\lambda^{-1}K_{p_i}$, we can write this as
$$\hat{f}_{\lambda}=\sum_{i=1}^n\left[\sum_{j=1}^n(I_n+\lambda^{-1}K_D)^{-1}_{ij}y_j\right]\lambda^{-1}K_{p_i},$$
which simplifies readily to (\ref{E:f0wahba}). \end{proof}

The  observations  about the graph ${\rm Gr}(T^*)$ used in the preceding proof are in the spirit of  the analysis of adjoints of operators  carried out by von Neumann \cite{JvN1932}. 

With $\hat{f}_{\lambda}$ being the minimizer as above, we can calculate the minimum value of $E(\cdot)$: 
\begin{equation}\label{E:Ef0min}
\begin{split}
E(\hat{f}_{\lambda}) &= \|(0,y) -(\hat{f}_{\lambda},T^*\hat{f}_{\lambda})\|_{H_{\lambda}\oplus \mbr^n}^2\\
&=\|(a,b)\|_{H_{\lambda}\oplus \mbr^n}^2\qquad\hbox{(using \eqref{E:abf0})}\\
&=\la Tb,Tb\ra_{H_{\lambda}}+\la b,b\ra_{\mbr^n}\\
&=\la T^*Tb,b\ra_{\mbr^n}+\la b,b\ra_{\mbr^n}\\
&=\la (T^*T +I_n)b,b\ra_{\mbr^n}\\
&=\la -y, b \ra_{\mbr^n}\qquad\hbox{(using \eqref{E:bTstarTy})}\\
&= \| (T^*T+I)^{-1/2}y\|_{\mbr^n}^2\qquad\hbox{(again using (\ref{E:bTstarTy}).)}
\end{split}
\end{equation}
It is useful to keep in mind that our definition of $T$ in (\ref{E:defT}), and hence of $T^*$, depends on $\lambda$.   
We note that the norm squared of ${\hat f}_{\lambda}$ itself is
\begin{equation}\label{E:fhatlamdanorm}
\begin{split}
\|{\hat f}_{\lambda}\|^2&=\lambda^{-1}\la T(T^*T+I_n)^{-1}y, T(T^*T+I_n)^{-1}y\ra_{H_\lambda}  \\
&= \lambda^{-1}\| (T^*T)^{1/2}(T^*T+I)^{-1}y\|_{\mbr^n}^2\\
\end{split}
\end{equation}

Let us now turn to the traditional spline setting. A function $f\in H$, whose graph passes through the training points $(p_i,y_i)$, for $i\in\{1,\ldots, n\}$, of minimum norm has to be found.  We present here a geometrical description in the spirit of Theorem \ref{T:svmminim}. This is in fact the result one would obtain by formally taking   $\lambda=0$ in  Theorem \ref{T:svmminim}. 

\begin{theorem}\label{T:splinemin} Let $H$ be a reproducing kernel Hilbert space of functions on a set $\mathcal X$, and let $(p_1,y_1),\ldots, (p_n,y_n)$ be points in ${\mathcal X}\times{\mbr}$. Let $K:{\mathcal X}\times {\mathcal X}\to\mbr$ be the reproducing kernel for $H$, and let $K_q:{\mathcal X}\to\mbr:q'\mapsto K(q,q')$, for every $q\in {\mathcal X}$.  Assume that the functions $K_{p_1},\ldots, K_{p_n}$ are linearly independent. Then, for any $y=(y_1,\ldots, y_n)\in\mbr^n$,  the element in 
$$\{f\in H\,:\,f(p_1)=y_1,\ldots, f(p_n)=y_n\}$$
 of minimum norm is given by
\begin{equation}\label{E:hatf0}
{\hat f}_0=\sum_{i=1}^nc_{0i}K_{p_i},
\end{equation}
where $c_0=(c_1,\ldots, c_n)=K_D^{-1}y$, with $K_D$ being the $n\times n$ matrix whose $(i,j)$-th entry  is $K(p_i,p_j)$.
\end{theorem}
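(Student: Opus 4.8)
The plan is to recast the $n$ interpolation constraints as a single linear equation $T^*f=y$ and then apply the standard Hilbert-space fact that the minimum-norm solution of such an equation lies in the range of the adjoint. This is the $\lambda=0$ shadow of the geometry in Theorem~\ref{T:svmminim}, now with no norm penalty. First I would introduce the linear map $T:\mbr^n\to H$ determined by $Te_i=K_{p_i}$ for $i\in\{1,\ldots,n\}$, the analogue of \eqref{E:defT} but without the factor $\lambda^{-1}$. Exactly as in the derivation of \eqref{E:Tstarf}, the reproducing property \eqref{E:reker} gives $\la T^*f,e_i\ra_{\mbr^n}=\la f,K_{p_i}\ra=f(p_i)$, so that $T^*f=\sum_{j=1}^n f(p_j)e_j$. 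Hence the conditions $f(p_j)=y_j$ for all $j$ amount to the single equation $T^*f=y$, and the feasible set is the affine subspace $\{f\in H:T^*f=y\}$.

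Next I would locate the minimizer geometrically. Each functional $\la K_{p_j},\cdot\ra$ is $\|\cdot\|$-continuous, so the feasible set is a closed affine subspace of $H$; it therefore contains a unique element of minimum norm, namely the unique feasible element lying in $(\ker T^*)^\perp$. The key step is the identity $(\ker T^*)^\perp=\overline{{\rm range}\,T}={\rm range}\,T$, where the last equality holds because ${\rm range}\,T=\text{span}\{K_{p_1},\ldots,K_{p_n}\}$ is finite-dimensional, hence closed. Consequently $\hat f_0\in{\rm range}\,T$, i.e. $\hat f_0=Tc$ for some $c\in\mbr^n$. This representer step is the heart of the matter and the only place needing a little care (the closedness of the range); everything else is routine.

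Finally I would solve for $c$. Substituting $\hat f_0=Tc$ into $T^*\hat f_0=y$ yields $T^*Tc=y$. The hypothesis that $K_{p_1},\ldots,K_{p_n}$ are linearly independent makes $T$ injective, so $T^*T$ is positive definite and invertible; in particular the feasible set is nonempty. Computing the matrix of $T^*T$ just as in Theorem~\ref{T:svmminim}, $\la T^*Te_i,e_j\ra_{\mbr^n}=\la K_{p_i},K_{p_j}\ra=K(p_i,p_j)=[K_D]_{ij}$, so $T^*T=K_D$. Hence $c=K_D^{-1}y$ and $\hat f_0=Tc=\sum_{i=1}^n c_i K_{p_i}$ with $c=K_D^{-1}y$, which is precisely \eqref{E:hatf0}.

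I expect no serious obstacle: the argument is a streamlined, unregularized version of the proof of Theorem~\ref{T:svmminim}, with the orthogonal-projection/representer step doing the conceptual work and the Gram-matrix identification $T^*T=K_D$ finishing the computation.
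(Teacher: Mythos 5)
Your proposal is correct and follows essentially the same route as the paper's own proof: the same map $T_0 e_i = K_{p_i}$, the same identification $(\ker T_0^*)^\perp = {\rm Im}(T_0)$ (closed because finite-dimensional), the same invertibility argument for $T_0^*T_0$, and the same Gram-matrix computation $T_0^*T_0 = K_D$. The only cosmetic difference is that you deduce nonemptiness of the feasible set from invertibility of $T^*T$, while the paper notes directly that linear independence makes $T_0^*$ surjective; these are interchangeable.
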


The assumption of linear independence of the $K_{p_i}$ is simply to ensure that there does exist a function $f\in H$ with values $y_i$ at the points $p_i$.
\begin{proof} 
Let $T_0:\mbr^n\to H$ be the linear mapping specified by $T_0e_i=K_{p_i}$, for $i\in\{1,\ldots, n\}$.  Then  the adjoint $T^*:H\to\mbr^n$ is given explicitly by
$$T_0^*f=\sum_{i=1}^n\la f,K_{p_i}\ra e_i=\sum_{i=1}^nf(p_i)e_i$$
and so 
\begin{equation} \label{E:clsubsp}
 \{f\in H\,:\,f(p_1)=y_1,\ldots, f(p_n)=y_n\}=\{f\in H\,:\,T_0^*f=y\}.
 \end{equation}
 Since the linear functionals $K_{p_i}$ are linearly independent, no nontrivial linear combination of them is $0$ and so the only vector in $\mbr^n$ orthogonal to the range of $T_0^*$ is $0$; thus
 \begin{equation}
 {\rm Im}(T_0^*)=\mbr^n.
 \end{equation}
Let  ${\hat f}_0$ be the point on the closed affine subspace in \eqref{E:clsubsp} that is nearest the origin in $H$. Then ${\hat f}_0$ is the point on $\{f\in H\,:\,T_0^*f=y\}$ orthogonal to $\ker T_0^*$. 

\begin{figure}[h]
\centering
\includegraphics[scale=0.3]{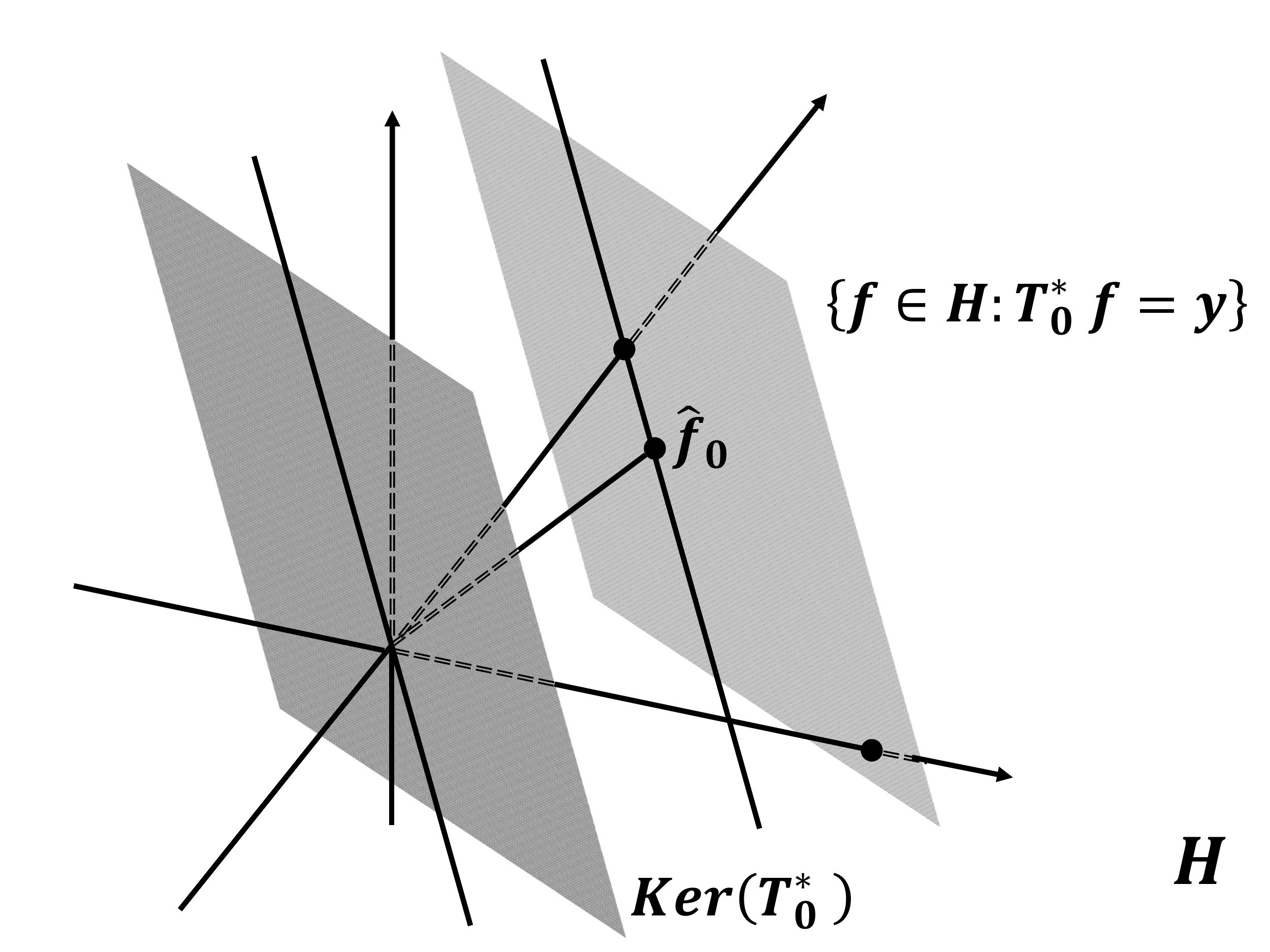}
\caption{\small{The point on $\{f\in H\,:\,T_0^*f=y\}$ closest to the origin.}}
\end{figure}

Now it is a standard observation that
  $$\ker T_0^*=\left[{\rm Im}(T_0)\right]^\perp.$$
 (If $v\in\ker(T_0^*)$ then $\la v,T_0w\ra=\la T_0^*v,w\ra=0$, so that $\ker T_0^*\subset \left[{\rm Im}(T_0)\right]^\perp$; conversely, if $v\in \left[{\rm Im}(T_0)\right]^\perp$ then $\la T_0^*v,w\ra=\la v, T_0w\ra=0$ for all $w\in H$ and so $T_0^*v=0$.)  Hence $\left(\ker T_0^*\right)^\perp$ is the closure of ${\rm Im}(T_0)$. Now ${\rm Im}(T_0)$ is a finite-dimensional subspace of $H$ and hence is closed; therefore
  \begin{equation}
\left( \ker T_0^*\right)^\perp= {\rm Im}(T_0).
 \end{equation}
 Returning to our point ${\hat f}_0$ we conclude that ${\hat f}_0\in {\rm Im}(T_0)$. Thus, ${\hat f}_0=T_0g$ for some $g\in H$.  The requirement that $T_0^*{\hat f}_0$ be equal to $y$ means that $(T_0^*T_0)g=y$, and so
 \begin{equation}\label{E:hatf0T0T}
 {\hat f}_0=T_0(T_0^*T_0)^{-1}y.
 \end{equation}
We observe here that $T_0^*T_0:\mbr^n\to\mbr^n$ is invertible because any $v\in \ker(T_0^*T_0)$ satisfies $\|T_0v\|^2=\la T_0^*T_0v,v\ra=0$, so that $\ker(T_0^*T_0)=\ker(T_0)$, and this is $\{0\}$, again by the linear independence of the functions $K_{p_i}$.
 The matrix for $T_0^*T_0$ is just the matrix $K_D$ because its   $(i,j)$-th entry is
 $$(T_0^*T_0)_{ij}=\la T_0 e_i,T_0 e_j\ra=\la K_{p_i}, K_{p_j}\ra=K(p_i,p_j)=(K_D)_{ij}.$$
 Thus, using (\ref{E:hatf0T0T}), ${\hat f}_0$ works out to $\sum_{i,j=1}^n(K_D^{-1})_{ij}y_jK_{p_i}$. \end{proof}

Working in the setting of Theorem \ref{T:splinemin},  and assuming that $H$ is separable, let $B$ be the Banach space obtained as completion of $H$ with respect to a measurable norm.  Recall from (\ref{E:GRcharIh}) that the  Gaussian Radon transform of a function $F$ on $H$ is the function on the set of closed affine subspaces of $H$ given by 
\begin{equation}\label{E:defGFLmuL}
GF(L)=\int F\,d\mu_{L},
\end{equation}
where   $L$ is any closed affine subspace of $H$, and $\mu_L$ is the Borel measure on $B$ specified by the Fourier transform
\begin{equation}\label{E:ftmuL}
\int_B e^{iI(h)}\,d\mu_L=e^{i\la h, {\hat f}_0\ra-\frac{1}{2}\|Ph\|^2},\quad\hbox{for all $h\in H$,}
\end{equation}
wherein ${\hat f}_0$ is the point on $L$ closest to the origin in $H$ and $P$ is the orthogonal projection onto the closed subspace $L-{\hat f}_0$. Let us apply this to the closed affine subspace
$$L=\{f\in H\,:\,f(p_1)=y_1,\ldots, f(p_n)=y_n\}.$$
From equation (\ref{E:ftmuL}) we see that  $I(h)$ is a Gaussian variable with mean $\la h, {\hat f}_0\ra$ and variance $\|Ph\|^2$. Now let us take for $h$ the function $K_p\in H$, for any point $p\in {\mathcal X}$; then  $I(K_p)$ is Gaussian, with respect to the measure $\mu_L$,  with mean 
\begin{equation}
\mbe_{\mu_L}({\tilde K}_p)= \la K_p, {\hat f}_0\ra ={\hat f}_0(p),
\end{equation}
where ${\tilde K}_p$ is the random variable $I(K_p)$ defined on $(B,\mu_L)$. The function ${\hat f}_0$ is as given in (\ref{E:hatf0}).  Now consider the special case where $p=p_i$ from the training data. Then 
$$\mbe_{\mu_L}({\tilde K}_{p_i})={\hat f}_0(p_i)=y_i,$$
because ${\hat f}_0$ belongs to $L$, by definition. Moreover, the variance of ${\tilde K}_{p_i}$ is the norm-squared of the orthogonal projection of ${\tilde K}_{p_i}$ onto the closed subspace
$$L_0=L-{\hat f}_0=\{f\in H\,:f(p_1)=0,\ldots, f(p_n)=0\}.$$
However, for any $f\in L$ we have 
$$\la f-{\hat f}_0, {\tilde K}_{p_i}\ra = f(p_i)-{\hat f}_0(p_i)=0,$$
and so the variance of ${\tilde K}_{p_i}$ is $0$. Thus, with respect to the measure $\mu_L$, the functions ${\tilde K}_{p_i}$ take the constant values $y_i$ almost everywhere. This is  analogous to our result Theorem \ref{T:fhatascp}; in the present context the conclusion is
\begin{equation}\label{E:fhatcp0}
{\hat f}_0(p)=\mbe\left[{\tilde K}_p\,|\, {\tilde K}_{p_1}=y_1,\ldots, {\tilde K}_{p_1}=y_1\right].
\end{equation}

\section{Proof of Lemma \ref{L:condExpGauss}}

The following result is a standard one, but we include a proof here for completeness.

\begin{lemma}\label{L:condExpGauss}
Suppose that $(Z, Z_1, Z_2, \ldots, Z_n)$ is a centered $\mbr^{n+1}$-valued Gaussian random variable  on a probability space $(\Omega, \mathcal{F}, \mathbb{P})$ and let $A \in \mathbb{R}^{n\times n}$ be the covariance matrix:
	\begin{equation*}
	A_{jk} = {\rm Cov}(Z_j, Z_k) \text{, for all } 1 \leq j,k \leq n,
	\end{equation*}
	and suppose that $A$ is invertible.
Then:
	\begin{equation}\label{E:condExpGauss}
	\mathbb{E}\left[ Z | Z_1, Z_2, \ldots, Z_n \right] = a_1Z_1 + a_2Z_2 + \ldots + a_nZ_n
	\end{equation}
where $a = (a_1, \ldots, a_n)\in \mathbb{R}^n$ is given by 
\begin{equation}\label{E:aAz}
a = A^{-1}z,
\end{equation}
where $z\in \mathbb{R}^n$ is given by $z_k =  {\rm Cov}(Z, Z_k)$ for all $1\leq k \leq n$.
\end{lemma}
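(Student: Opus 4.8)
The plan is to use the $L^2$ characterization of conditional expectation together with the fundamental fact that, for jointly Gaussian random variables, zero correlation is equivalent to independence. First I would write down the candidate $W = a_1 Z_1 + \cdots + a_n Z_n$ with $a = A^{-1}z$ (which exists and is unique precisely because $A$ is invertible), and check that it satisfies the right normal equations. Computing directly, $\mathrm{Cov}(W, Z_k) = \sum_{j=1}^n a_j A_{jk} = (Aa)_k = z_k = \mathrm{Cov}(Z, Z_k)$ for every $1 \leq k \leq n$, so that the residual $Z - W$ is uncorrelated with each $Z_k$.

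The key step is then to promote uncorrelatedness to independence. Since $(Z, Z_1, \ldots, Z_n)$ is jointly Gaussian and $W$ is a fixed linear combination of $Z_1, \ldots, Z_n$, the vector $(Z - W, Z_1, \ldots, Z_n)$ is again jointly Gaussian. For a jointly Gaussian family, components that are pairwise uncorrelated are independent; hence $Z - W$ is independent of $(Z_1, \ldots, Z_n)$, and therefore independent of the $\sigma$-algebra $\sigma(Z_1, \ldots, Z_n)$.

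From this the conclusion follows quickly. Because $Z - W$ is independent of the conditioning $\sigma$-algebra and is centered (each $Z_j$, and $Z$, having mean zero), we get $\mathbb{E}[Z - W \mid Z_1, \ldots, Z_n] = \mathbb{E}[Z - W] = 0$. On the other hand, $W$ is $\sigma(Z_1, \ldots, Z_n)$-measurable, so $\mathbb{E}[W \mid Z_1, \ldots, Z_n] = W$. Adding these two identities yields $\mathbb{E}[Z \mid Z_1, \ldots, Z_n] = W = a_1 Z_1 + \cdots + a_n Z_n$, which is exactly \eqref{E:condExpGauss} with the coefficient vector \eqref{E:aAz}.

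I expect the only genuinely substantive point to be the justification of ``jointly Gaussian and uncorrelated implies independent,'' though this is entirely standard and can be cited or verified by factoring the joint characteristic function; the remaining linear-algebra manipulation of the covariance matrix and the two conditional-expectation computations are routine once the candidate $W$ is in hand.
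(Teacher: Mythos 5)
Your proof is correct and follows essentially the same route as the paper's: both decompose $Z$ into a linear combination of the $Z_j$ plus a residual, use joint Gaussianity to upgrade uncorrelatedness of the residual to independence of $\sigma(Z_1,\ldots,Z_n)$, and conclude from measurability of the linear part. The only difference is one of direction---the paper starts from the abstract $L^2$ orthogonal projection and derives $a = A^{-1}z$ from the normal equations at the end, whereas you posit $a = A^{-1}z$ up front and verify the normal equations---but the substance is identical.
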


\begin{proof}  Let $Z_0$ be the orthogonal projection of  $Z$ on the linear span of $Z_1,\ldots, Z_n$; thus  $Y=Z-Z_0$ is orthogonal to $Z_1,\ldots, Z_n$ and, of course, $(Y, Z_1,\ldots, Z_n)$ is Gaussian.  Hence, being all jointly Gaussian, the random variable $Y$ is independent of $(Z_1,\ldots, Z_n)$.  Then for any $S\in\sigma(Z_1,\ldots, Z_n)$ we have
\begin{equation}\label{E:ZZ1n}
\begin{split}
\mbe[Z1_S] &=\mbe[Z_01_S]+\mbe[Y1_S]\\
&=\mbe[Z_01_S]+\mbe[Y]\mbe[1_S]\\
&=\mbe[Z_01_S].
\end{split}
\end{equation}
Since this holds for all $S\in \sigma(Z_1,\ldots, Z_n)$, and since the random variable $Z_0$, being a linear combination of $Z_1,\ldots, Z_n$, is $\sigma(Z_1,\ldots, Z_n)$-measurable, we conclude that
\begin{equation}
Z_0=\mbe\left[Z\,|\,\sigma(Z_1,\ldots, Z_n)\right].
\end{equation}
Thus the conditional expectation of $Z$ is the orthogonal projection $Z_0$ onto the {\em linear span} of the variables $Z_1,\ldots, Z_n$.

Writing
	\begin{equation*}
	Z_0 = a_1Z_1 + a_2Z_2 + \ldots + a_nZ_n,
	\end{equation*}
	we have
	$$\mbe[ZZ_j]=\mbe[Z_0Z_j]=\sum_{k=1}^n\mbe[Z_jZ_k]a_k=(Aa)_j,$$
	noting that $A_{jk}={\rm Cov}(Z_j,Z_k)=\mbe[Z_jZ_k]$ since all these variables have mean $0$ by hypothesis.
Hence we have $a=A^{-1}z$.
\end{proof}

{\bf Acknowledgments}. This work is part of a research project covered by  NSA grant
  H98230-13-1-0210. I. Holmes would like to express her gratitude to the Louisiana State University Graduate School for awarding her the LSU Graduate School Dissertation Year Fellowship, which made most of her contribution to this work possible. We thank Kalyan B. Sinha for useful discussions.
	
\begin{bibdiv}
\begin{biblist}

\bib{Aravkin}{article}{
	author={Aleksandr Y Aravkin},
	author={Bradley M. Bell},
	author={James V. Burke},
	author={Gianluigi Pillonetto},
	title={The connection between Bayesian estimation of a Gaussian random field and RKHS},
	journal={Submitted to IEEE Transactions on Neural Networks and Learning Systems},
	eprint={http://arxiv.org/pdf/1301.5288v3.pdf},
}

\bib{BecGR2010}{article}{
   author={Becnel, Jeremy J.},
   title={The support theorem for the Gauss-Radon transform},
   journal={Infin. Dimens. Anal. Quantum Probab. Relat. Top.},
   volume={15},
   date={2012},
   number={2},
   pages={1250006, 21},
   issn={0219-0257},
   review={\MR{2957133}},
   doi={10.1142/S0219025712500063},
}

\bib{BecSen2012}{article}{
   author={Becnel, Jeremy J.},
   author={Sengupta, Ambar N.},
   title={A support theorem for a Gaussian Radon transform in infinite
   dimensions},
   journal={Trans. Amer. Math. Soc.},
   volume={364},
   date={2012},
   number={3},
   pages={1281--1291},
   issn={0002-9947},
   review={\MR{2869177}},
   doi={10.1090/S0002-9947-2011-05365-1},
	 eprint={https://www.math.lsu.edu/~sengupta/papers/RGBecSenDec09.pdf},
}

\bib{Bog}{book}{
   author={Bogachev, Vladimir I.},
   title={Gaussian measures},
   series={Mathematical Surveys and Monographs},
   volume={62},
   publisher={American Mathematical Society},
   place={Providence, RI},
   date={1998},
   pages={xii+433},
   isbn={0-8218-1054-5},
   review={\MR{1642391 (2000a:60004)}},
}

\bib{deBLyn}{article}{
   author={de Boor, Carl},
   author={Lynch, Robert E.},
   title={On splines and their minimum properties},
   journal={J. Math. Mech.},
   volume={15},
   date={1966},
   pages={953--969},
   review={\MR{0203306 (34 \#3159)}},
}

\bib{Der}{article}{
	author={Der, Ricky},
	author={Lee, Daniel},
	title={Large-Margin Classification in Banach Spaces},
	journal={Journal of Machine Learning Research - Proceedings Track},
	booktitle={Proceedings of the Eleventh International Conference on Artificial Intelligence and Statistics (AISTATS-07)},
	volume={2},
	year={2007},
	pages={91-98},
	eprint={http://jmlr.csail.mit.edu/proceedings/papers/v2/der07a/der07a.pdf},
}

\bib{DriverNotes}{article}{
	author={Driver, B.},
	title={Probability Tools with Examples},
	eprint={http://www.math.ucsd.edu/~bdriver/Cornell\%20Summer\%20Notes\%202010/Lecture_Notes/Probability\%20Tools\%20with\%20Examples.pdf},
}

\bib{Nate}{article}{
	author={Eldredge, Nate},
	title={Math 7770: Analysis and Probability on Infinite-Dimensional Spaces},
	journal={Online class notes},
	year={2012},
	eprint={http://www.math.cornell.edu/~neldredge/7770/7770-lecture-notes.pdf},
}

\bib{Gr}{article}{
   author={Gross, Leonard},
   title={Abstract Wiener spaces},
   conference={
      title={Proc. Fifth Berkeley Sympos. Math. Statist. and Probability
      (Berkeley, Calif., 1965/66), Vol. II: Contributions to Probability
      Theory, Part 1},
   },
   book={
      publisher={Univ. California Press},
      place={Berkeley, Calif.},
   },
   date={1967},
   pages={31--42},
   review={\MR{0212152 (35 \#3027)}},
}

\bib{H}{book}{
   author={Helgason, Sigurdur},
   title={The Radon transform},
   series={Progress in Mathematics},
   volume={5},
   edition={2},
   publisher={Birkh\"auser Boston Inc.},
   place={Boston, MA},
   date={1999},
   pages={xiv+188},
   isbn={0-8176-4109-2},
   review={\MR{1723736 (2000m:44003)}},
}

\bib{HolSen2012}{article}{
   author={Holmes, Irina},
   author={Sengupta, Ambar N.},
   title={A Gaussian Radon transform for Banach spaces},
   journal={J. Funct. Anal.},
   volume={263},
   date={2012},
   number={11},
   pages={3689--3706},
   issn={0022-1236},
   review={\MR{2984081}},
   doi={10.1016/j.jfa.2012.09.005},
	 eprint={http://arxiv.org/pdf/1208.5743v2.pdf},
}

\bib{Hol2013}{article}{
	author={Holmes, Irina},
	title={An inversion formula for the Gaussian Radon transform for Banach spaces},
	journal={Infin. Dimens. Anal. Quantum Probab. Relat. Top.},
	volume={16},
  date={2013},
  number={4},
	eprint={http://arxiv.org/pdf/1308.1392.pdf},
}

\bib{Huszar}{article}{
	author={Husz{\'a}r, Ferenc},
	author={Lacoste-Julien, Simon},
	title={A Kernel Approach to Tractable Bayesian Nonparametrics},
  journal={arXiv preprint},
  year={2011},
	eprint={http://arxiv.org/pdf/1103.1761v3.pdf},
}

\bib{Wahba1}{article}{
   author={Kimeldorf, George S.},
   author={Wahba, Grace},
   title={A correspondence between Bayesian estimation on stochastic
   processes and smoothing by splines},
   journal={Ann. Math. Statist.},
   volume={41},
   date={1970},
   pages={495--502},
   issn={0003-4851},
   review={\MR{0254999 (40 \#8206)}},
	 eprint={http://www.stat.wisc.edu/~wahba/ftp1/oldie/kw70bayes.pdf},
}

\bib{Wahba2}{article}{
	author={Kimeldorf, George S.},
  author={Wahba, Grace},
	title = {Some Results on {T}chebycheffian Spline Functions},
	journal = {Journal of Mathematical Analysis and Applications},
	number = {1},
  pages = {82--95},
	volume = {33},
  year = {1971},
	eprint={http://www.stat.wisc.edu/~wahba/ftp1/oldie/kw71.pdf},
	
}

\bib{Ku1}{book}{
   author={Kuo, Hui Hsiung},
   title={Gaussian measures in Banach spaces},
   series={Lecture Notes in Mathematics, Vol. 463},
   publisher={Springer-Verlag},
   place={Berlin},
   date={1975},
   pages={vi+224},
   review={\MR{0461643 (57 \#1628)}},
}

\bib{MS}{article}{
   author={Mihai, Vochita},
   author={Sengupta, Ambar N.},
   title={The Radon-Gauss transform},
   journal={Soochow J. Math.},
   volume={33},
   date={2007},
   number={3},
   pages={415--433},
   issn={0250-3255},
   review={\MR{2344371 (2009a:44006)}},
}

\bib{Ozertem}{article}{
	author={Ozertem, Umut},
	author={Erdogmus, Deniz},
	title={RKHS Bayes Discriminant: A Subspace Constrained Nonlinear Feature Projection for Signal Detection},
	journal={Neural Networks, IEEE Transactions on}, 
	year={2009}, 
	volume={20}, 
	number={7}, 
	pages={1195-1203}, 
	doi={10.1109/TNN.2009.2021473}, 
	ISSN={1045-9227},
}

\bib{Muk}{article}{
   author={Pillai, Natesh S.},
   author={Wu, Qiang},
   author={Liang, Feng},
   author={Mukherjee, Sayan},
   author={Wolpert, Robert L.},
   title={Characterizing the function space for Bayesian kernel models},
   journal={J. Mach. Learn. Res.},
   volume={8},
   date={2007},
   pages={1769--1797},
   issn={1532-4435},
   review={\MR{2332448}},
	 eprint={http://jmlr.org/papers/volume8/pillai07a/pillai07a.pdf},
}

\bib{Radon}{article}{
   author={Radon, Johann},
   title={\"Uber die Bestimmung von Funktionen durch ihre Integralwerte
   l\"angs gewisser Mannigfaltigkeiten},
   language={German},
   conference={
      title={Computed tomography},
      address={Cincinnati, Ohio},
      date={1982},
   },
   book={
      series={Proc. Sympos. Appl. Math.},
      volume={27},
      publisher={Amer. Math. Soc.},
      place={Providence, R.I.},
   },
   date={1982},
   pages={71--86},
   review={\MR{692055 (84f:01040)}},
}

\bib{Sollich}{article}{
	author = {Sollich, Peter},
	title = {Bayesian Methods for Support Vector Machines: Evidence and Predictive Class Probabilities},
	journal = {Mach. Learn.},
	volume = {46},
	number = {1-3},
  year = {2002},
	issn = {0885-6125},
	pages = {21--52},
	doi = {10.1023/A:1012489924661},
	eprint={http://dx.doi.org/10.1023/A:1012489924661},
} 

\bib{learning}{article}{
	author={Sriperumbudur, Bharath K.},
	author={Fukumizu, Kenji},
	author={ Lanckriet, Gert R. G.},
	title = {Learning in Hilbert vs. Banach Spaces: A Measure Embedding Viewpoint.},
	booktitle = {NIPS},
	year = {2011},
	pages = {1773-1781},
	eprint={http://cosmal.ucsd.edu/~gert/papers/nips_rkbs_11.pdf},
}

\bib{Ingo}{book}{
   author={Steinwart, Ingo},
   author={Christmann, Andreas},
   title={Support vector machines},
   series={Information Science and Statistics},
   publisher={Springer},
   place={New York},
   date={2008},
   pages={xvi+601},
   isbn={978-0-387-77241-7},
   review={\MR{2450103 (2010f:62002)}},
}

\bib{JvN1932}{article}{
   author={von Neumann, J.},
   title={\"Uber adjungierte Funktionaloperatoren},
   language={German},
   journal={Ann. of Math. (2)},
   volume={33},
   date={1932},
   number={2},
   pages={294--310},
   issn={0003-486X},
   review={\MR{1503053}},
   doi={10.2307/1968331},
}

\bib{Xu}{article}{
   author={Zhang, Haizhang},
   author={Xu, Yuesheng},
   author={Zhang, Jun},
   title={Reproducing kernel Banach spaces for machine learning},
   journal={J. Mach. Learn. Res.},
   volume={10},
   date={2009},
   pages={2741--2775},
   issn={1532-4435},
   review={\MR{2579912 (2011c:62219)}},
   doi={10.1109/IJCNN.2009.5179093},
	 eprint={http://machinelearning.wustl.edu/mlpapers/paper_files/jmlr10_zhang09b.pdf},
}

\bib{Zhang}{article}{
	author={Zhihua Zhang},
	author={Guang Dai},
	author={Donghui Wang},
	author={Michael I. Jordan},
	title={Bayesian Generalized Kernel Models},
	journal={Journal of Machine Learning Research - Workshop and Conference Proceedings},
	volume={9},
	year={2010},
	booktitle={Proceedings of the Thirteenth International Conference on Artificial Intelligence and Statistics (AISTATS-10)},
	pages={972-979},
	eprint={http://www.cs.berkeley.edu/~jordan/papers/zhang-etal-aistats10.pdf},
}

\end{biblist}
\end{bibdiv}

\end{document}